\documentclass[10pt]{article} 
\usepackage[preprint]{tmlr}

\usepackage{amsmath,amsfonts,bm}

\def\eqref#1{equation~\ref{#1}}

\def\1{\bm{1}}

\DeclareMathAlphabet{\mathsfit}{\encodingdefault}{\sfdefault}{m}{sl}
\SetMathAlphabet{\mathsfit}{bold}{\encodingdefault}{\sfdefault}{bx}{n}

\usepackage{graphicx}
\usepackage{array}
\usepackage{booktabs} 
\usepackage{soul}
\usepackage{bbm}
\usepackage{xcolor}
\usepackage{subcaption}
\usepackage{url}            
\usepackage{amsfonts}       
\usepackage{nicefrac}       
\usepackage{microtype}      
\usepackage{tikz}
\usepackage[dvipsnames]{xcolor}
\usepackage{amsmath}
\usepackage{amssymb}
\usepackage{amsthm}
\usepackage{multirow}
\usepackage{multicol}
\usepackage{hyperref}
\hypersetup{
    hidelinks, 
}
\usepackage{cleveref}
\usepackage{url}

\newtheorem{theorem}{Theorem}
\newtheorem{proposition}[theorem]{Proposition}

\setstcolor{red}

\title{On the Role of the Projector in {Contrastive} Self-Supervised Learning: Last-Layer Rank Dynamics Drive Representation Quality \thanks{Under review at Transactions on Machine Learning Research (TMLR)"}}

\author{\name Siladittya Manna \email smanna@hkbu.edu.hk, siladittyam@iisc.ac.in \\
      \addr Hong Kong Baptist University, Hong Kong, China \thanks{Majority of the work for this manuscript was done while Dr. Siladittya Manna was working as a Senior Research Assistant at Hong Kong Baptist University.}\\
      \addr Indian Institute of Science, Bengaluru, India
      \AND
      \name Priyangshu Mandal \email priyangshu.mandal@kgpian.iitkgp.ac.in \\
      \addr Indian Institute of Technology Kharagpur, India
      \AND
      \name Umapada Pal \email umapada@isical.ac.in\\
      \addr Indian Statistical Institute, Kolkata, India
      \AND
      \name Saumik Bhattacharya \email saumik@ece.iitkgp.ac.in\\
      \addr Indian Institute of Technology Kharagpur, India
      }

\def\month{05}  
\def\year{2026} 
\def\openreview{\url{https://openreview.net/forum?id=XXXX}} 

\begin{document}

\maketitle

\begin{abstract}
The dimensional collapse of representations in self-supervised {contrastive} learning is an ever-present issue. One notable technique to prevent such a collapse of representations is using a multi-layered perceptron network called Projector. In several works, the projector has been found to heavily influence the quality of representations learned in a self-supervised {contrastive} pre-training task. However, the question still lingers. \textit{What role does the projector play?} Assuming the projector mitigates dimensional collapse, what prevents the terminal layer of the base encoder from functioning as the projector in the absence of an explicit multi-layer perceptron (MLP) head? In this work, we intend to study what happens inside the projector by examining the rank dynamics of the same and the encoder through empirical study and analysis. Through mathematical analysis, we observe that the effect of rank reduction predominantly occurs in the last layer. Motivated by this insight, we propose a weight regularization strategy applied specifically to the last layer. We demonstrate that this targeted approach yields better performance than applying orthogonal weight regularization across the entire network (WeRank), both with and without a projector. Our method improves Top-1 accuracy by more than 1\% on SimCLR on the ImageNet100 dataset and consistently outperforms baseline SimCLR variants on CIFAR datasets, supporting our interpretation of the projector’s role.
\end{abstract}

\section{Introduction}
\label{sec:intro}

Self-supervised learning aims to learn representations without any human annotations. Recent works like SimCLR \citep{chen2020simclr}, MoCov2 \citep{chen2020mocov2}, DCL \citep{yeh2021dcl}, BYOL \citep{grilljb2020byol}, Barlow Twins \citep{ZbontarJMLD21barlowtwins}, etc. present frameworks which allow learning of representations which are similar for semantically similar samples. However, this objective may lead to a complete collapse of representations when the representations of all samples get mapped trivially to a single point in the representation space.

Various techniques, such as using a large batch size \citep{chen2020simclr}, momentum encoder \citep{chen2020mocov2, grilljb2020byol}, stop gradient \citep{chen2020simsiam}, feature whitening \citep{bardes2022vicreg,ZbontarJMLD21barlowtwins} and clustering \citep{caron2020swav}, have been used to prevent the complete collapse of representations. However, contrastive self-supervised learning still suffers from dimensional collapse, where the embedding vectors only span a lower-dimensional subspace. Dimensional collapse
occurs when the variance of information along some dimensions becomes insignificant. We avoid saying that variance will be zero because information content along any dimension can never be entirely zero in practical terms. 

In \citet{Hua2021OnFD}, the author discusses that dimensional collapse is mainly related to a strong correlation between information flowing through different dimensions. This challenging issue of dimensional collapse has also been addressed in works like \citet{balestriero2022connoncon}, RankMe \citep{garrido2023rankme}, DirectCLR \citep{jing2022directclr} and WeRank \citep{pasand2024werank}. These works also stress the importance of full-rank representations for better performance on downstream tasks. However, WeRank does not provide any mathematical insight into the dimensional collapse of representation. In DirectCLR, the attempt at investigating the causes of the dimensional collapse is limited to toy examples, and only uses a truncated vector for training, leaving the last few dimensions non-trainable. This, however, is not fully capable of preventing dimensional collapse. We instead use the full output vector for both training and evaluation as well. \textcolor{black}{Additionally, WeRank restricts the set of possible learnable functions to a subset satisfying the orthogonality condition of the weight matrices, which causes an over-regularization effect and prevents learning of expressive features. We find several pieces of evidence in the literature, not limited to self-supervised learning, which state that anisotropy is essential for capturing non-uniform data structures, with neural rendering models using it to resolve directional ambiguity and disambiguate geometry from complex appearances for high-fidelity 3D reconstructions \citep{Wang_2025_nerf, GaoC0Y25anisdf}. In parallel, research on Large Language Models demonstrates that decreasing isotropy via the I-STAR regularizer improves semantic performance by facilitating data clustering and reducing the intrinsic dimensionality of representations \citep{RudmanE24istar}. {Recently, LeJePA \citep{randall2025lejepa} also stresses that embeddings should be isotropic and that anisotropy amplifies both bias and variance.} We show that not enforcing the feature maps to be isotropic via orthogonal weight regularisation, while only regularising the last layer, improves the weight spectrum and reduces the dimensional collapse effect in self-supervised contrastive learning when trained without a projector.} Furthermore, we show that, unlike WeRank \citep{pasand2024werank}, it is not necessary to apply the weight regularisation on the whole network, thereby reducing the computation overhead from $\mathcal{O}(L\cdot n^3)$ to $\mathcal{O}(n^3)$, where $L$ is the scalar factor that comes naturally, as shown in the later section Sec. \ref{subsec:roleproj}.

{Even though the embedding covariance matrix remains ill-conditioned when a projector is used, we understand that the projector plays an important role in reducing dimensional collapse.} The role of the projector has been studied previously in works like \cite{gupta2022understanding, song2023sparseness, xue2024investigating}. However, none of the above works explores \textit{what} the fundamental causes are behind the effect of dimensional collapse. 

In this work, we first empirically verify the decorrelating effect of InfoNCE loss. Next, we try to determine what happens inside the projector and its role in self-supervised contrastive learning. We further investigate the phenomenon occurring in the encoder layer that causes degradation in performance in the case of dimensional collapse resulting from negligible eigenvalues of the embedding covariance matrix, both with or without a projector. \ul{Finally, we employ a simple strategy for self-supervised learning, both with and without using a projector, which verifies our mathematical conclusion about the role of weight norm.} We summarize our contributions as follows:

\begin{itemize}
\setlength{\itemsep}{-0.3em}
    \item We investigate the role of the projector in self-supervised contrastive learning in the light of dimensional collapse. To our knowledge, this is \textcolor{black}{one of the }first works to do so. 
    \item We further investigate the phenomenon in the encoder when not using a projector for contrastive self-supervised pre-training, giving us more insight into the phenomenon of dimensional collapse.
    \item Based on our findings, we propose a simple strategy to improve performance in contrastive self-supervised pre-training by applying weight regularization only on the last layer.
    \item The proposed regularization strategy outperforms the contemporary regularization strategy WeRank on benchmark datasets CIFAR10, CIFAR100 and ImageNet100.
\end{itemize}

\section{Related Works}
\label{sec:relworks}

SSL methods take different approaches to prevent a complete collapse of representations. Instance discrimination methods like SimCLR \citep{chen2020simclr}, MoCov2 \citep{chen2020mocov2}, and DCL \citep{yeh2021dcl} use repulsion between negative samples to prevent complete collapse. However, in addition to the negative repulsion in the InfoNCE loss, they also use a projector which projects the encoder output representations into a lower-dimensional space before computing the InfoNCE loss. Methods like DeepCluster \citep{caron2018deepc} and SwAV \citep{caron2020swav} use a clustering-based instance-group discrimination approach. However, dimensional collapse persists according to \citet{garrido2023rankme} and \citet{jing2022directclr}.

Architectures similar to the above are also seen in dimension contrastive methods like BYOL \citep{grilljb2020byol}, where the extra predictor for predicting the output of the projector from the momentum updated target encoder and $l2$-normalization prevents complete collapse. SimSiam \citep{chen2020simsiam}, on the other hand, uses a stop-gradient method to prevent the same. WMSE \citep{ermolov2021wmse}, ZeroCL \citep{zhang2022zerocl} uses feature whitening to prevent collapse. 

Non-contrastive methods like Barlow Twins \citep{ZbontarJMLD21barlowtwins} aim to decorrelate the feature dimensions to reduce redundancy in the output embeddings, thereby preventing dimensional collapse. However, Barlow Twins fails to perform without the projector, as we will see in the later subsections. VICReg \citep{bardes2022vicreg} uses a covariance term in the loss to do feature decorrelation like Barlow Twins. However, according to \citet{garrido2023rankme}, even these methods are not free from dimensional collapse.


\citet{Hua2021OnFD} discusses that the strong correlation between dimensions of the representation vector is the primary cause of dimensional collapse, and uses feature decorrelation to prevent it and improve performance. \textcolor{black}{\citet{balestriero2022connoncon} also uses a decorrelation loss like VICReg as a method to prevent dimensional collapse and learn optimal representations}. \citet{gupta2022understanding} shows that a projector prevents low-rank backbone features, thereby preventing dimensional collapse. However, it does not explore the reason behind it. This work primarily discusses that a learnable projection head is a way of mitigating the shortcomings of contrastive loss and helps in learning generalizable representations. A detailed discussion of the relationship between downstream performance and embedding rank is also presented in \citet{garrido2023rankme}. WeRank \citep{pasand2024werank} uses the same feature decorrelation strategy to deduce that the weight norm of each layer should be as close to the identity matrix as possible to prevent dimensional collapse.

DirectCLR \citep{jing2022directclr} achieved considerable success in preventing collapse. This work mainly proposed two findings as the possible causes of dimensional collapse: (1) implicit regularization due to over-parametrization of networks, and (2) strong augmentations. However, in terms of performance (linear evaluation accuracy), it falls short of SimCLR with a non-linear projector. 

{\color{black}
DINO \citep{caron2021dino} introduced self-distillation without labels, where a student network learns from a momentum-updated teacher using normalized feature matching, enabling Vision Transformers to learn semantic features without supervision. iBOT \citep{zhou2021ibot} extended DINO by combining self-distillation with masked image modelling, allowing simultaneous global and local representation learning through patch-level prediction. DINOv2 \citep{maxime2024dinov2} further refined the framework with large-scale curated data, improved regularization, and stronger transferability, yielding universal visual features competitive with supervised models. I-JEPA \citep{assran2023ijepa} departed from contrastive and pixel-level objectives by predicting high-level latent representations of masked regions, emphasizing abstraction and context understanding. Together, these methods progressively evolve self-supervised learning from instance discrimination toward semantically rich, transferable, and predictive representations.

Other related works, such as VCReg \citep{zhu2023vcreg}, extend the idea of weight and feature regularization by encouraging high-variance and low-covariance representations to enhance feature diversity and prevent neural collapse. This approach aligns conceptually with our method and WeRank \citep{pasand2024werank}, as both aim to improve representation quality and transferability through more balanced and decorrelated feature learning.
}

\section{Methodology}
\label{sec:methodology}

\subsection{Preliminaries}
\label{subsec:prelims}

In this work, we consider SSL pre-training with SimCLR as the baseline. Let us denote $f$ and $g$ as the encoder and the projector, respectively. The encoder output and the projector output embeddings are denoted by $h = f(x)$ and $z = g(f(x))$, respectively, where $x$ denotes the input sample. The total number of dimensions in the encoder output embedding is given by $D$. $d_0$ and $d_r$ denote the number of dimensions of the encoder output embedding, which are trainable and non-trainable or fixed to a constant value. To learn the representations, the InfoNCE loss is given by, 

\begin{minipage}{\linewidth}
\begin{equation}
    \label{eqn:infonce}
    \mathcal{L}_{infonce} = - \mathop{\mathbb{E}}_{i} \left[ \frac{exp(s_{ii+})}{exp(s_{ii+}) + \sum_{\substack{j=1\\j\neq i}}^{B} exp(s_{ij})}\right]
\end{equation}
\end{minipage}

\noindent where $s_{ii+}$ and $s_{ij}$ are the cosine similarity between the projector output embeddings of the samples of positive pair $(x_i,x_{i+})$ obtained by augmentations applied on the sample $x_i$, and the samples of negative pair $(x_i, x_j)$, respectively, and $B$ denotes the batch size.

In the later subsections, we divide the output embeddings into 2 parts, which we refer to as trainable and non-trainable dimensions. We define the trainable part of an embedding to consist of those dimensions through which the gradient propagation is allowed to flow. At the same time, the non-trainable part of the embedding means the opposite. 

{\color{black}

\subsection{Definitions}
\label{subsec:defs}

{
\paragraph{(D1) Dimensional Collapse:} Dimensional collapse occurs when representation vectors $h \in \mathbb{R}^D$ are effectively constrained to an $r$-dimensional linear subspace $\mathbb{R}^r \subset \mathbb{R}^D$ with $r < \min(N, D)$ ($N$ denotes the sample count), characterized by eigenvalues of the representation covariance matrix $\Sigma_h = \mathrm{Cov}(h)$ decaying to near-zero magnitudes ($\lambda_i(\Sigma_h) \le \epsilon$ for $i \in \{r+1, \dots, D\}$). Unlike complete collapse (where all embeddings map to a single point), dimensional collapse selectively suppresses variance along specific orthogonal directions, restricting representations to a low-rank subspace.
}

\paragraph{(D2) Information Bottleneck:} The information bottleneck (IB) is a principle for representation learning that aims to extract a compressed representation of a variable that is as predictive as possible of a target variable. The IB framework is based on finding a compressed representation, $T$, of an input variable, $X$, that preserves the maximum relevant information about a target variable, $Y$. This is expressed through a constrained optimization problem. The goal is to find the representation $T$ that maximizes the relevance $\mathcal{I}(T; Y)$ for a given compression level $\mathcal{I}(T; X)$. This can be expressed as a Lagrangian optimization problem: $\min _{p(t|x)}\mathcal{I}(T;X)-\beta \mathcal{I}(T;Y)$, where $\beta$ is the Lagrangian multiplier.

{
\paragraph{(D3) High-level representations:} High-level representations refer to feature spaces that encode abstract, class-discriminative, and view-invariant semantic attributes of the input data while discarding task-irrelevant, low-level spatial variations.
In deep architectures, feature extraction is hierarchical; earlier layers capture local spatial primitives, whereas the terminal layers (such as layer $L$) map these primitives into high-level semantic representations. Consequently, weight degradation or rank collapse at layer $L$ directly impairs the network's capacity to output well-conditioned high-level features suitable for downstream tasks. 
}



\textbf{Relevance to Downstream Tasks}: The principle of information bottleneck is utilised in the downstream task to discard irrelevant information while retaining useful information related to the downstream task. The high-level representations, that is, the task-specific representations in the deeper layers, are also relevant to the downstream task. Finally, the dimensional collapse, which can occur for both the self-supervised pre-training and supervised training stages, causes the learning to occur in a lower-dimensional subspace rather than in the high-dimensional embedding space. A larger utilisation of the learning subspace results in better performance in the downstream task.

}

\subsection{Theoretical Setup and Scope}
\label{sec:theoretical_setup}

In this subsection, we formalize the modelling assumptions under which the theoretical analysis in the subsequent sections is carried out. This setup is not a repetition of the preliminaries in Sec. \ref{subsec:prelims}, but rather a restriction of scope that specifies which components of the network and training objective are explicitly modelled and which are abstracted away.

\paragraph{Architectural assumptions.}
We consider an InfoNCE-based contrastive SSL framework with a convolutional neural network (CNN) encoder \( f \), optionally followed by a projection head \( g \), as introduced in Sec. \ref{subsec:prelims}. While a projector is not strictly required for all self-supervised learning paradigms (e.g., DirectCLR \citep{jing2022directclr}), it is a standard architectural component in InfoNCE-based contrastive frameworks such as SimCLR and has been shown to play an important role in stabilising training and mitigating representation collapse. Accordingly, the presence of a projector is assumed in the general setup, while its absence is treated as a special case that is analyzed explicitly in later sections.

\paragraph{Layer-wise scope of analysis.}
Although the encoder \( f \) consists of multiple convolutional layers, our analysis is intentionally restricted to the \emph{final two convolutional layers of the encoder} and the layers of the projector. All earlier layers are treated as a black box that produces intermediate feature representations with well-defined second-order statistics (i.e., finite covariance).

This restriction is motivated by empirical observations showing that dimensional collapse and rank degradation emerge most prominently at the layer whose output is directly optimized by the contrastive loss (shown in later sections). Consequently, all propositions in this paper concern the behaviour of:
(i) the final encoder layer when no projector is used, or
(ii) the projector layers when a projection head is present.

Throughout the paper, we denote by layer \( L \) the layer whose output embedding is used to compute the InfoNCE loss. In this work, layer \(l-1\) is termed a shallower layer with respect to \(l\), while the layer \(l\) is deeper with respect to the layer \(l-1\). While there is no specific threshold in the literature to specify which layers are deep or shallow, by the term ``deep'' we will consider layers which are close to the final layer $L$, while ``shallow'' layers indicate those closer to the input layer.

\paragraph{Linearization and analytical simplifications.}
For analytical tractability, the layers under consideration (i.e., the last two encoder layers and the projector layers) are modelled as linear transformations parameterized by weight matrices. Non-linear activations, batch normalization, and skip connections are omitted from the theoretical analysis. This simplification allows us to isolate the structural relationship between embedding covariance, weight norms, and dimensional collapse.

A convolutional layer without non-linear activation defines a linear operator. Let $X \in \mathbb{R}^{C_{\mathrm{in}} \times H \times W}$ and
$Y \in \mathbb{R}^{C_{\mathrm{out}} \times H' \times W'}$ be the input and output feature maps, respectively. After vectorization, there exists a structured sparse matrix $W_{\mathrm{conv}} \in \mathbb{R}^{(C_{\mathrm{out}}H'W') \times (C_{\mathrm{in}}HW)}$ such that $\mathrm{vec}(Y) = W_{\mathrm{conv}} \mathrm{vec}(X)$. The sparsity and block Toeplitz structure of $W_{\mathrm{conv}}$ (Appendix Sec. \ref{app:conv_fc}) arise from local connectivity and weight sharing, but do not affect the linear-algebraic arguments used in this work.

Importantly, this linearization is used only for theoretical reasoning; all empirical results in this paper are obtained using full CNN architectures with non-linearities and normalization layers intact.


Under this setup, the theoretical results in the following Sections should be interpreted as local, layer-wise characterizations of dimensional collapse in InfoNCE-based contrastive learning, conditioned on the architectural and modelling assumptions stated above. 

\subsection{Motivation}
\label{subsec:motivation}

In this work, the main motivation is to study the phenomenon occurring inside the projector in the self-supervised contrastive learning scenario and what happens in the absence of it. In DirectCLR \citep{jing2022directclr}, it is stated that in instance discrimination-based contrastive learning, even though the presence of positive and negative samples should prevent the dimensional collapse of representations intuitively, it still occurs.

\begin{figure}[!ht] 
     \begin{center}
    \begin{subfigure}[b]{0.45\linewidth}
        \centering
     \includegraphics[width = \linewidth]{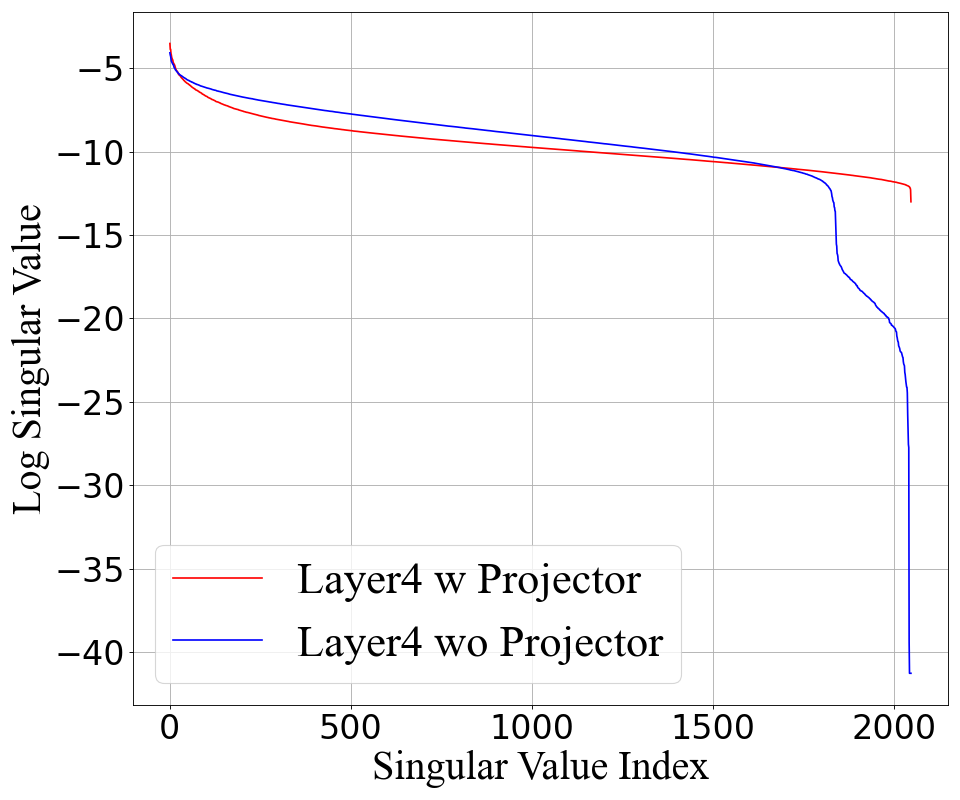}
     \caption{}
     \label{fig:infoncedecorr}
     \end{subfigure}
     \hfill
     \begin{subfigure}[b]{0.45\linewidth}
     \centering
         \includegraphics[width = \linewidth]{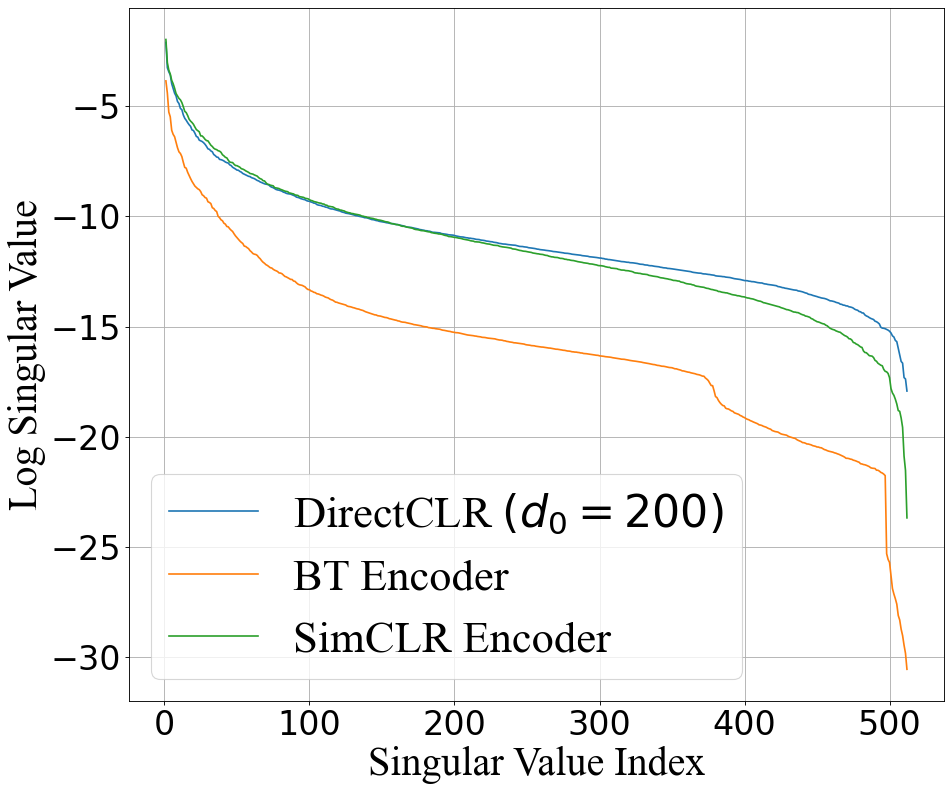}
     \vspace{-1em}
     \caption{}
     \label{fig:encoders_BT_direct_simclr}
     \end{subfigure}
     \end{center}
     \vspace{-1em}
     \caption{(a) Singular value plots of the covariance matrix of ResNet50 encoder output embeddings pre-trained on ImageNet100 using SimCLR with and without a projector. `Layer4' indicates the last layer in the ResNet50 encoder. `\textcolor{black}{blue}': without (wo) projector, `\textcolor{black}{red}':with projector. (b) Singular value plots of Barlow Twins and SimCLR encoders compared with DirectCLR. \textcolor{black}{The plot (a) exhibits that without the projector, the singular values of the covariance matrix drop sharply. A similar observation is also found in Barlow Twins (b), while the vanilla SimCLR and DirectCLR method prevents the sharp decline.}}
 \end{figure}

We find this to be true empirically as shown in Fig. \ref{fig:infoncedecorr}, where we observe that the magnitudes of the sorted eigenvalue spectrum dip considerably when the encoder is trained without a non-linear projector than when trained with one. Similar findings are also reported in \citet{gupta2022understanding}. Furthermore, methods using feature decorrelation to prevent dimensional collapse, like \citet{balestriero2022connoncon} or \citet{Hua2021OnFD}, still suffer from dimensional collapse. This is primarily due to the low-rank embeddings of \textcolor{black}{shallower} layers, that is, from the encoder \citep{pasand2024werank}.
To determine the role of the projector, we empirically study whether the InfoNCE loss has a decorrelating effect. Then we try to analyze the dynamics of the projector through rank decomposition of the covariance matrix and how it prevents dimensional collapse.

\begin{figure*}
    \centering
    \begin{subfigure}[b]{0.245\linewidth}
        \centering
        \includegraphics[width =\linewidth]{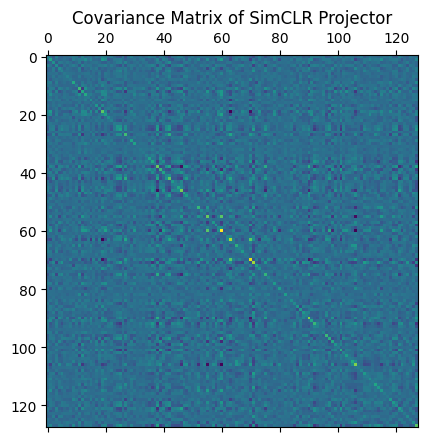}
        \caption{CIFAR100}
        \label{fig:c100_cov_mat_proj}
    \end{subfigure}%
    \hfill
    \begin{subfigure}[b]{0.245\linewidth}
        \centering
        \includegraphics[width =\linewidth]{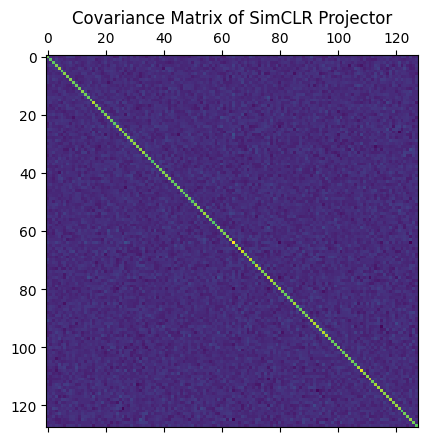}
        \caption{ImageNet100}
        \label{fig:in100_cov_mat_proj}
    \end{subfigure}%
    \hfill
    \begin{subfigure}[b]{0.245\linewidth}
        \centering
        \includegraphics[width = \linewidth]{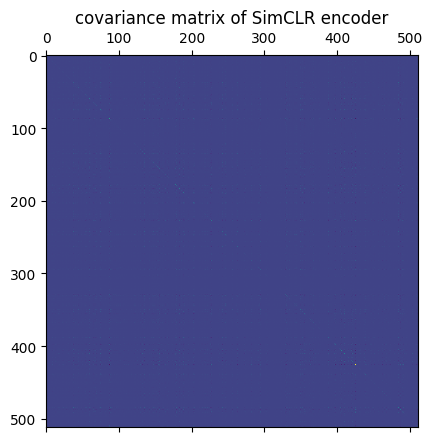}
        \caption{CIFAR100}
        \label{fig:c100_cov_mat_enc}
    \end{subfigure}%
    \hfill
    \begin{subfigure}[b]{0.245\linewidth}
        \centering
        \includegraphics[width = \linewidth]{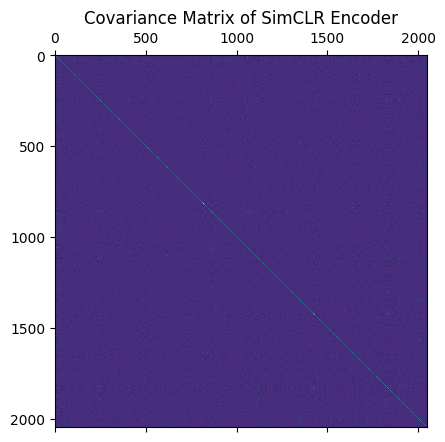}
        \caption{ImageNet100}
        \label{fig:in100_cov_mat_enc}
    \end{subfigure}
    \caption{Covariance matrices of output embedding of the projector for SimCLR trained on (a) CIFAR100 and (b) ImageNet100. Covariance matrices of embeddings from the SimCLR encoder trained on (c) CIFAR100 and (d) ImageNet100. \textcolor{black}{On both CIFAR100 and ImageNet100, we can observe the projector output embedding exhibiting low covariance in the off-diagonal components, which points towards the decorrelation effects of the InfoNCE loss (a and b). Furthermore, the decorrelation effect is propagated partially to the encoder output embeddings, as evident from the uniform nature of the covariance matrix values (c and d).} Best viewed at 300\%.}
    \label{fig:cov_mat_proj_enc}
\end{figure*}%

\subsection{Does InfoNCE have a decorrelating effect?}
\label{subsec:infoncedecorr}

According to \citet{ZhangZZPYK22}, InfoNCE also acts as a decorrelating loss, similar to Barlow Twins \citep{ZbontarJMLD21barlowtwins} or \citet{balestriero2022connoncon}. 
In Fig. \ref{fig:c100_cov_mat_proj} and \ref{fig:in100_cov_mat_proj}, we show the covariance matrix of the output feature dimensions. From the covariance matrix of the embeddings of the CIFAR100 and ImageNet100 datasets, we can see that the magnitudes of the diagonal elements of the covariance matrix are much higher than the non-diagonal ones. This shows that the InfoNCE loss has a decorrelating effect, as shown in \citet{ZhangZZPYK22}. However, from Fig. \ref{fig:c100_cov_mat_enc} and \ref{fig:in100_cov_mat_enc}, we see that the diagonal nature of the covariance matrix of the encoder output embeddings is not present. This proves that even if the loss enforces feature decorrelation on the projector output embeddings, it is possible to obtain low-rank output embeddings from the encoder.

\subsection{Understanding the events in Projector in case of Dimensional Collapse}
\label{subsec:roleproj}

 It is empirically observed in DirectCLR \citep{jing2022directclr} that InfoNCE loss fails to properly optimize the parameters of a network without a projector and results in dimensional collapse. Barlow Twins (BT) \citep{ZbontarJMLD21barlowtwins} performs better than most contrastive learning frameworks on benchmark datasets, but not when implemented without a projector, even though a decorrelation loss is applied.
 The singular value spectrum in Fig. \ref{fig:encoders_BT_direct_simclr} plots the singular values of BT (w/o projector), SimCLR (w/o projector), and DirectCLR. We can see that the dimensional collapse effect in BT is greater than in SimCLR w/o projector, even though it is trained directly using a decorrelation-based loss. According to \citet{ZhangZZPYK22}, the projector is essential for a decorrelation-based framework too, even though both InfoNCE and the loss used in \citet{ZbontarJMLD21barlowtwins} have a decorrelating component. So, the question arises, \textbf{\textit{what exactly happens after the addition of a non-linear projector towards the prevention of dimensional collapse?}}. 

 
  \paragraph{\textbf{A Linear Algebraic perspective:}} In RankMe \citep{garrido2023rankme} and DirectCLR \citep{jing2022directclr}, the authors have shown that without a projector, the embeddings from the pre-trained encoder have a low rank. Does a low-rank embedding indicate that the useful information can be approximated using fewer dimensions? But then, \textbf{\textit{why does it lead to worse performance, if that is the case?}}. How is it different from the information bottleneck theory of the projector? \citep{wang2025projection}.
  



\subsubsection{\textcolor{black}{Weight Norm Analysis}}
\label{subsubsec:wtnormanal}
 
It is important to note that in DirectCLR \citep{jing2022directclr}, a part of the output vector $z$ is left unchanged; that is, the kernels leading to the unchanged part of $z$ still have the randomly initialized weights at the end of pre-training. Now, these randomly initialized weights have non-zero variance. However, when the rank of the encoder output embeddings is reduced, it practically means that the variance of the information along those dimensions is very low. A very low variance means there is almost no useful information available in that dimension. Thus, when not using a projector, the reduction in rank in the last layer embedding covariance matrix indicates that there is little variance of information along some of the embedding dimensions. Consequently, assuming that the input to the last layer is full rank and well-conditioned, it indicates that the norm of the weights in the last layer $\lVert W^i_l \rVert^2 < \epsilon$, where $\epsilon$ is very small, and $\lambda_i(Cov(h)) \rightarrow 0$, where $W^i_l$ is the layer weights corresponding to the $i$-th output embedding dimension of layer $l$. 

\noindent

{
\begin{proposition}
[Collapsed dimensions imply vanishing weight norms]
Let $x \in \mathbb{R}^{D_i}$ be a random vector with covariance
$\Sigma_x := \mathrm{Cov}(x) \succ 0$.
Let $W \in \mathbb{R}^{D_o \times D_i}$ be a linear map and define
$h = Wx$, with $h_i = W_i x$, where $W_i$ denotes the $i$-th row of $W$. Furthermore, let $\operatorname{Cov}(h) = \Sigma_h = P \Lambda P^T$ be the spectral decomposition of the representation covariance matrix, where $P=\left[p_1,\hdots,p_{D_o}\right] \in \mathbb{R}^{D_o \times D_o}$ is an orthogonal matrix of eigenvectors $(P^TP = PP^T = I_{D_o})$ and $\Lambda = \operatorname{diag}(\lambda_1, \hdots, \lambda_{D_o})$ is the diagonal matrix of eigenvalues. Then, an eigenvalue $\lambda_i(\Sigma_h) = 0$ if and only if $W^T p_i = 0$, where $p_i$ is the $i$-th eigenvector (column) of $P$. Consequently, dimensional collapse $(\lambda_i(\Sigma_h)=0)$ cannot be induced by $X$ and is driven solely by $W$.


\end{proposition}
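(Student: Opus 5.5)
The plan is to compute $\Sigma_h$ in terms of $\Sigma_x$ and $W$, and then read off the eigenvalue associated with $p_i$ as a quadratic form. Because $h = Wx$ is a linear image of $x$, bilinearity of covariance gives $\Sigma_h = \mathrm{Cov}(Wx) = W \Sigma_x W^T$. For the $i$-th eigenvector we have $\Sigma_h p_i = \lambda_i p_i$ with $\|p_i\|=1$, since $P$ is orthogonal. Multiplying on the left by $p_i^T$ then gives
\begin{equation*}
\lambda_i(\Sigma_h) = p_i^T \Sigma_h p_i = (W^T p_i)^T \Sigma_x (W^T p_i).
\end{equation*}

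For the ``if'' direction, $W^T p_i = 0$ makes the right-hand side zero, so $\lambda_i = 0$.

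For the ``only if'' direction, I would use $\Sigma_x \succ 0$. If $\lambda_i = 0$, the quadratic form $v^T \Sigma_x v$ vanishes at $v = W^T p_i$. Positive definiteness forces $v = 0$, because $v^T\Sigma_x v \ge \lambda_{\min}(\Sigma_x)\|v\|^2$ with $\lambda_{\min}(\Sigma_x) > 0$. This is the only step that genuinely uses an assumption. If $\Sigma_x$ were merely positive semidefinite, $W^T p_i$ could lie in $\ker \Sigma_x$ without vanishing, so I would state explicitly where $\Sigma_x \succ 0$ enters.

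The ``consequently'' clause follows from the equivalence: whether $\lambda_i = 0$ is characterized by the condition $W^T p_i = 0$, which involves only $W$ (and the eigenvector $p_i$), whenever $\Sigma_x \succ 0$. One subtlety is that $p_i$ itself depends on $\Sigma_x$. To make the claim independent of $x$, I would add the equivalent formulation that the null space of $\Sigma_h$ equals the null space of $W^T$. The inclusion $\ker W^T \subseteq \ker \Sigma_h$ is immediate. For the reverse inclusion, $\Sigma_h v = 0$ implies $v^T \Sigma_h v = 0$, and the positive-definiteness argument above then gives $W^T v = 0$. So the zero-eigenspace, and hence the number of collapsed dimensions (equal to $D_o - \mathrm{rank}(W)$), is determined by $W$ alone.

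To connect back to the text's row-norm intuition, I would optionally note the following. If $p_i = e_k$ is a coordinate direction, then $W^T e_k = W_k^T$ is the $k$-th row, so collapse along that coordinate is equivalent to $\|W_k\| = 0$. I do not expect a real obstacle: the argument is routine linear algebra once $\Sigma_h = W\Sigma_x W^T$ is written down. The only point needing care is handling repeated eigenvalues, which the null-space formulation resolves since it does not depend on the choice of eigenbasis.
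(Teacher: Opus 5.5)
Your proposal is correct and follows essentially the same route as the paper. Like the paper, you write $\Sigma_h = W\Sigma_x W^\top$, express $\lambda_i(\Sigma_h)$ as the quadratic form $(W^\top p_i)^\top \Sigma_x (W^\top p_i)$, and use the positive-definiteness lower bound $\lambda_{\min}(\Sigma_x)\|W^\top p_i\|_2^2$ for the nontrivial direction. Your identity $\ker \Sigma_h = \ker W^\top$ is a cleaner, basis-independent version of the paper's closing rank--nullity analysis (exactly $D_o - \operatorname{rank}(W)$ eigenvalues vanish), and it resolves the dependence of $p_i$ on $\Sigma_x$ that the paper leaves implicit.
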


\noindent
\textbf{\textit{Proof}}: We can prove the above proposition by a simple deduction. Let $x \in \mathbb{R}^{D_i}$ be the embeddings with dimensions $N \times D_i$, and $W \in \mathbb{R}^{D_o \times D_i}$ be the weight matrix with dimensions $D_o \times D_i$. To consider only a single dimension $i$, we take the $i$-th row of the weight matrix as $W^i \in \mathbb{R}^{1 \times D_i}$. Let the covariance of $x$ be defined as 
\begin{equation}
    \Sigma_x = Cov(x) = \mathop{\mathop{\mathbb{E}}}_{x}\left[\left(x-\mathop{\mathbb{E}}_x[x])\right)\left(x-\mathop{\mathbb{E}}_x[x])\right)^T\right]
\end{equation}
It is to be noted that, since we are dealing with the last layer only, we assume that the input to the last layer does not have any collapsed dimension, and $\Sigma_x$ is positive definite with the smallest eigenvalue $\lambda_{min} > 0$ and is well conditioned, that is $\lambda_{min} \approx \lambda_{max} > 0$.
\begin{equation}
    h_i = W^i_l x_l = W^i_l.\left( W_{l-1} x_{l-1} \right) = \left(W^i_l W_{l-1} \right) x_{l-1}
\end{equation}

Because $\Sigma_x = \operatorname{Cov}(X) \in \mathbb{R}^{D_i \times D_i}$ is real, symmetric, and Positive Definite ($\Sigma_x \succ 0$), its eigenvalues are strictly positive:\begin{equation}\lambda_{\max}(\Sigma_x) \ge \dots \ge \lambda_i(\Sigma_x) \ge \dots \ge \lambda_{\min}(\Sigma_x) > 0\end{equation}

The covariance of the representation $h = WX$ is given by:
\begin{equation}
\operatorname{Cov}(h) = \operatorname{Cov}(Wx) = W \operatorname{Cov}(x) W^\top = W \Sigma_x W^\top
\end{equation}

Expressing $\operatorname{Cov}(h)$ via its spectral decomposition $\Sigma_h = \operatorname{Cov}(h) = P \Lambda P^\top$, we establish the matrix identity:
\begin{equation}
P \Lambda P^\top = W \Sigma_x W^\top
\end{equation}

Premultiplying both sides by $P^\top$ and postmultiplying by $P$, and using the orthogonality property $P^\top P = P P^\top = I_d$, we isolate the diagonal eigenvalue matrix $\Lambda$:
\begin{equation}
\Lambda = P^\top W \Sigma_x W^\top P
\end{equation}

Since $\Lambda = P^\top \operatorname{Cov}(h) P$ is diagonal with entries $\Lambda_{ii} = \lambda_i(\Sigma_h)$, and the $i$-th column of $P$ is the eigenvector $\mathbf{p}_i$, taking the $(i,i)$-th entry of both sides gives:
\begin{equation}
    \lambda_i(\Sigma_h) = \mathbf{p}_i^\top W \Sigma_x W^\top \mathbf{p}_i = (W^\top \mathbf{p}_i)^\top \Sigma_x (W^\top \mathbf{p}_i)
    \label{eqn:lambda_sigh}
\end{equation}

Define the projection vector $\mathbf{q}_i \in \mathbb{R}^n$ as:
\begin{equation}
\mathbf{q}_i = W^\top \mathbf{p}_i 
\end{equation}

Substituting $\mathbf{q}_i$ back into Equation \ref{eqn:lambda_sigh} gives the quadratic form:
\begin{equation}
\lambda_i(\Sigma_h) = \mathbf{q}_i^\top \Sigma_x \mathbf{q}_i
\label{eqn:lambda_sigh2}
\end{equation}

By the Rayleigh-Ritz Theorem \citep{Trefethen2022NLA}, for any vector $\mathbf{q}_i \in \mathbb{R}^n$ and positive-definite matrix $\Sigma_x \succ 0$:
\begin{equation}
\lambda_{\min}(\Sigma_x) \|\mathbf{q}_i\|_2^2 \le \mathbf{q}_i^\top \Sigma_x \mathbf{q}i \le \lambda_{\max}(\Sigma_x) \|\mathbf{q}_i\|_2^2
\label{eqn:uplowbounds}
\end{equation}

Combining Equations \ref{eqn:lambda_sigh2} and \ref{eqn:uplowbounds} yields the sandwich inequality:
\begin{equation}
\lambda_{\min}(\Sigma_x) \|\mathbf{q}_i\|_2^2 \le \lambda_i(\Sigma_h) \le \lambda_{\max}(\Sigma_x) \|\mathbf{q}_i\|_2^2
\end{equation}

We evaluate both directions of the logical equivalence $\lambda_i(\Sigma_h) = 0 \iff \mathbf{q}_i = \mathbf{0}$:

$(\Leftarrow)$  Sufficiency: If $\mathbf{q}_i = \mathbf{0}$, then $\lambda_i(\Sigma_h) = \mathbf{0}^\top \Sigma_x \mathbf{0} = 0$.

$(\Rightarrow)$ Necessity: Suppose $\lambda_i(\Sigma_h) = 0$. From the lower bound in equation (12):

\begin{equation}
\lambda_{\min}(\Sigma_x) \|\mathbf{q}_i\|_2^2 \le 0
\end{equation}

Because $\Sigma_x \succ 0$, we have $\lambda_{\min}(\Sigma_x) > 0$. Since $\Vert{}\mathbf{q}_i\Vert{}_2^2 \ge 0$, inequality (13) forces:
\begin{equation}
\|\mathbf{q}_i\|_2^2 = 0 \implies \mathbf{q}_i = \mathbf{0}
\end{equation}

Therefore,
\begin{equation}
\lambda_i(\Sigma_h) = 0 \quad \Longleftrightarrow \quad \mathbf{q}_i = \mathbf{0} \quad \Longleftrightarrow \quad W^\top \mathbf{p}_i = \mathbf{0}
\end{equation}

Since $P$ is an orthogonal matrix, its $i$-th column $\mathbf{p}_i$ is an orthonormal eigenvector satisfying $\Vert{}\mathbf{p}_i\Vert{}_2 = 1 \neq 0$. The condition $W^\top \mathbf{p}_i = \mathbf{0}$ requires the non-zero eigenvector $\mathbf{p}_i \in \mathbb{R}^{D_o}$ to lie in the nullspace of $W^\top$:
\begin{equation}
\mathbf{p}_i \in \operatorname{null}(W^\top)
\end{equation}

By the Rank-Nullity Theorem, the dimension of this nullspace is:
\begin{equation}
\dim(\operatorname{null}(W^\top)) = d - \operatorname{rank}(W)
\end{equation}

We analyze two distinct cases for $W$: 

\begin{itemize}
    \item Full Row Rank ($\operatorname{rank}(W) = d$):

    The nullspace $\operatorname{null}(W^\top) = \{\mathbf{0}\}$. Since $\Vert{}\mathbf{p}_i\Vert{}_2 = 1$, no eigenvector can belong to $\operatorname{null}(W^\top)$. Thus, $\mathbf{q}_i = W^\top \mathbf{p}_i \neq \mathbf{0}$ for all $i$, which guarantees that $\lambda_i(A) > 0$ for all $i = 1, \dots, d$.

    \item Rank-Deficient ($\operatorname{rank}(W) = r < d$):

The nullspace dimension is $\dim(\operatorname{null}(W^\top)) = d - r > 0$. Consequently, exactly $d - r$ orthogonal eigenvectors $\mathbf{p}_i$ span this nullspace and satisfy $W^\top \mathbf{p}_i = \mathbf{0}$, forcing exactly $d - r$ eigenvalues to collapse ($\lambda_i(\Sigma_h) = 0$).
\end{itemize}

Because $\lambda_{\min}(\Sigma_x) > 0$ strictly prevents the positive-definite input distribution $X$ from collapsing $\mathbf{q}_i^\top \Sigma_x \mathbf{q}_i = 0$ for any non-zero $\mathbf{q}_i$, output eigenvalue collapse ($\lambda_i(\Sigma_h) = 0$) occurs if and only if $W^\top \mathbf{p}_i = \mathbf{0}$. Hence, dimensional collapse is solely driven by $W$.

While the above proposition establishes that exact collapse ($\lambda_i(\Sigma_h) = 0$) on well-conditioned data is strictly weight-induced ($\operatorname{rank}(W) < D_o$), real-world feature distributions frequently lie on low-dimensional submanifolds, yielding ill-conditioned input covariance matrices where $\lambda_{\min}(\Sigma_x) = \epsilon \approx 0$.

In this regime, the lower bound in Equation \ref{eqn:uplowbounds} collapses ($\lambda_{\min}(\Sigma_x) \to 0$), removing the guarantee that full-rank weights ($\operatorname{rank}(W) = D_o$) protect against representation collapse. To investigate how input ill-conditioning interacts with $W$ to induce practical representation collapse ($\lambda_i(\Sigma_h) \le \epsilon$), we relax the well-conditioned assumption on $\Sigma_x$ and analyze the joint spectral alignment between $W$ and the eigenspaces of $\Sigma_x$.
}

\paragraph{Key Takeaway 1: Weights with near-zero norm prevent learning of high-level representations, provided input is well-conditioned} Thus, if the norm and the variance of the weights \textcolor{black}{corresponding to the collapsed dimensions} become close to zero, it prevents the kernels \textcolor{black}{in the deeper layers} in the backbone encoder from learning class-specific high-level representations, consequently hampering the downstream performance, as the presence of lower values in the weight variances reduces their representation learning capacity.

\begin{figure}[!ht]
  \centering
  \begin{subfigure}[b]{0.24\linewidth}
      \centering
    \includegraphics[width = 0.95\linewidth]{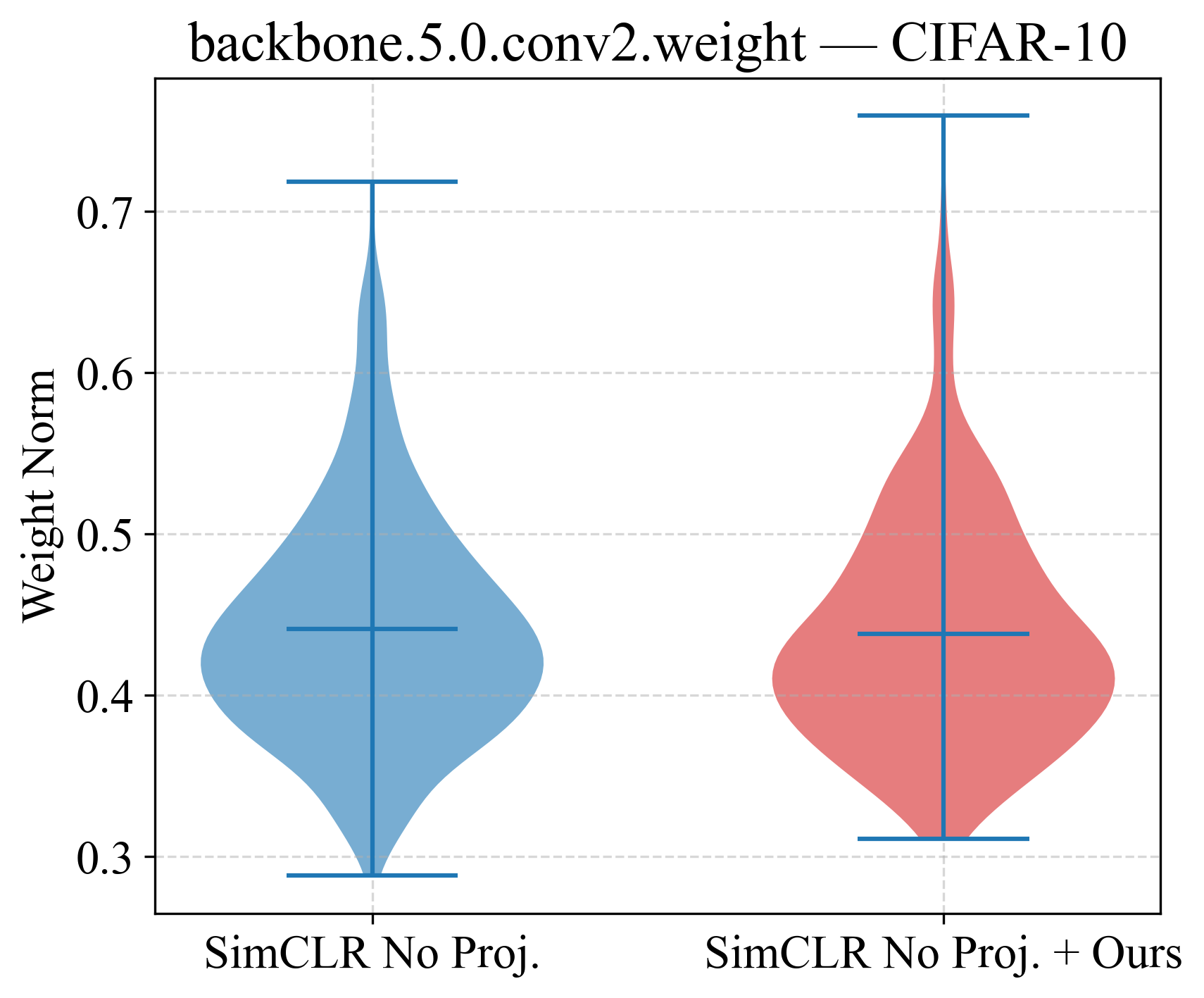}
      \caption{CIFAR10 - Last Layer - 1st Block - 2nd Conv}
      \label{fig:vio_proj_c10_502}
  \end{subfigure}
  \hfill
  \begin{subfigure}[b]{0.24\linewidth}
      \centering
    \includegraphics[width = 0.95\linewidth]{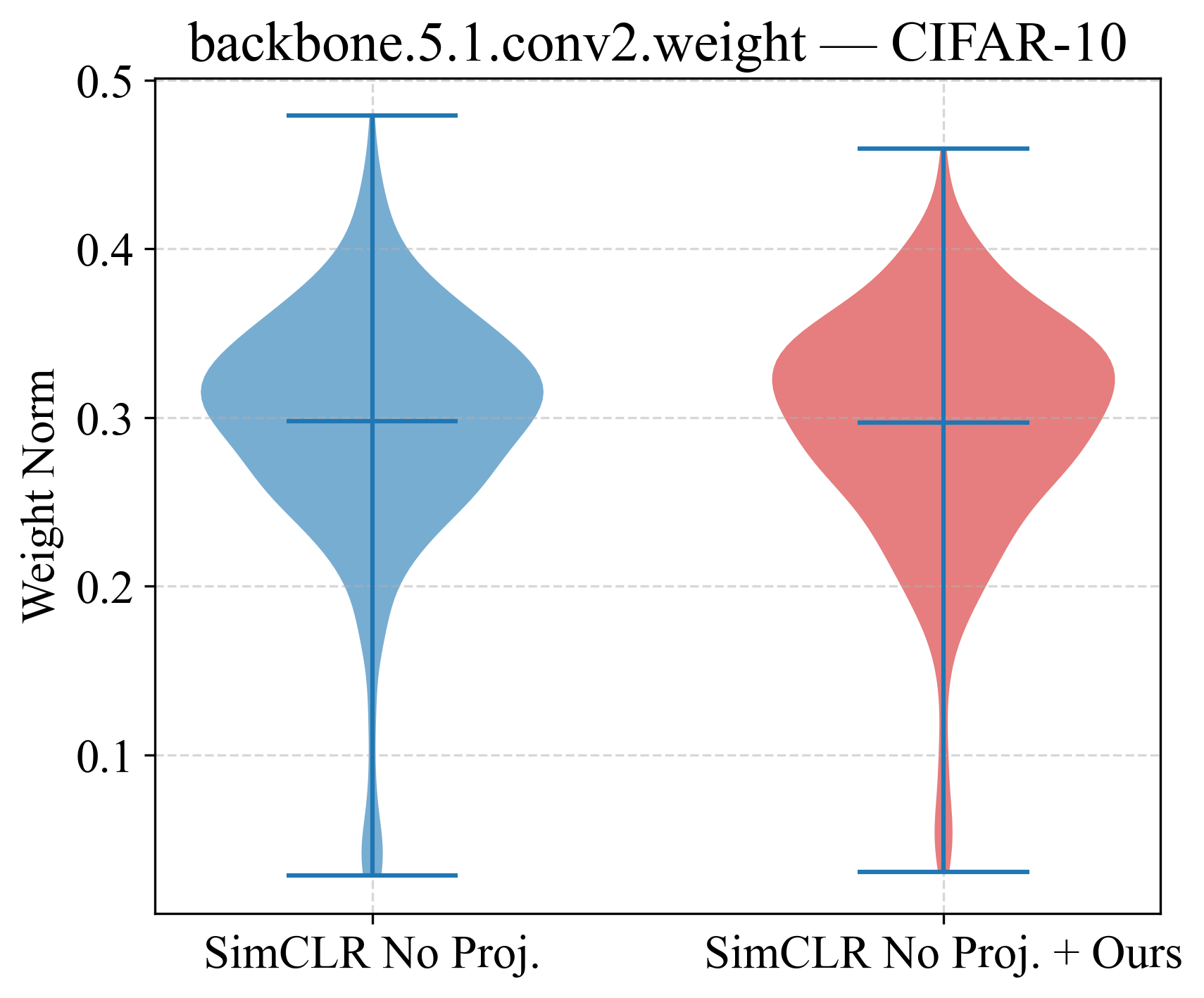}
      \caption{CIFAR10 - Last Layer - 2nd Block - 2nd Conv}
      \label{fig:vio_proj_c10_512}
  \end{subfigure}%
  \hfill
  \begin{subfigure}[b]{0.24\linewidth}
      \centering
    \includegraphics[width = 0.95\linewidth]{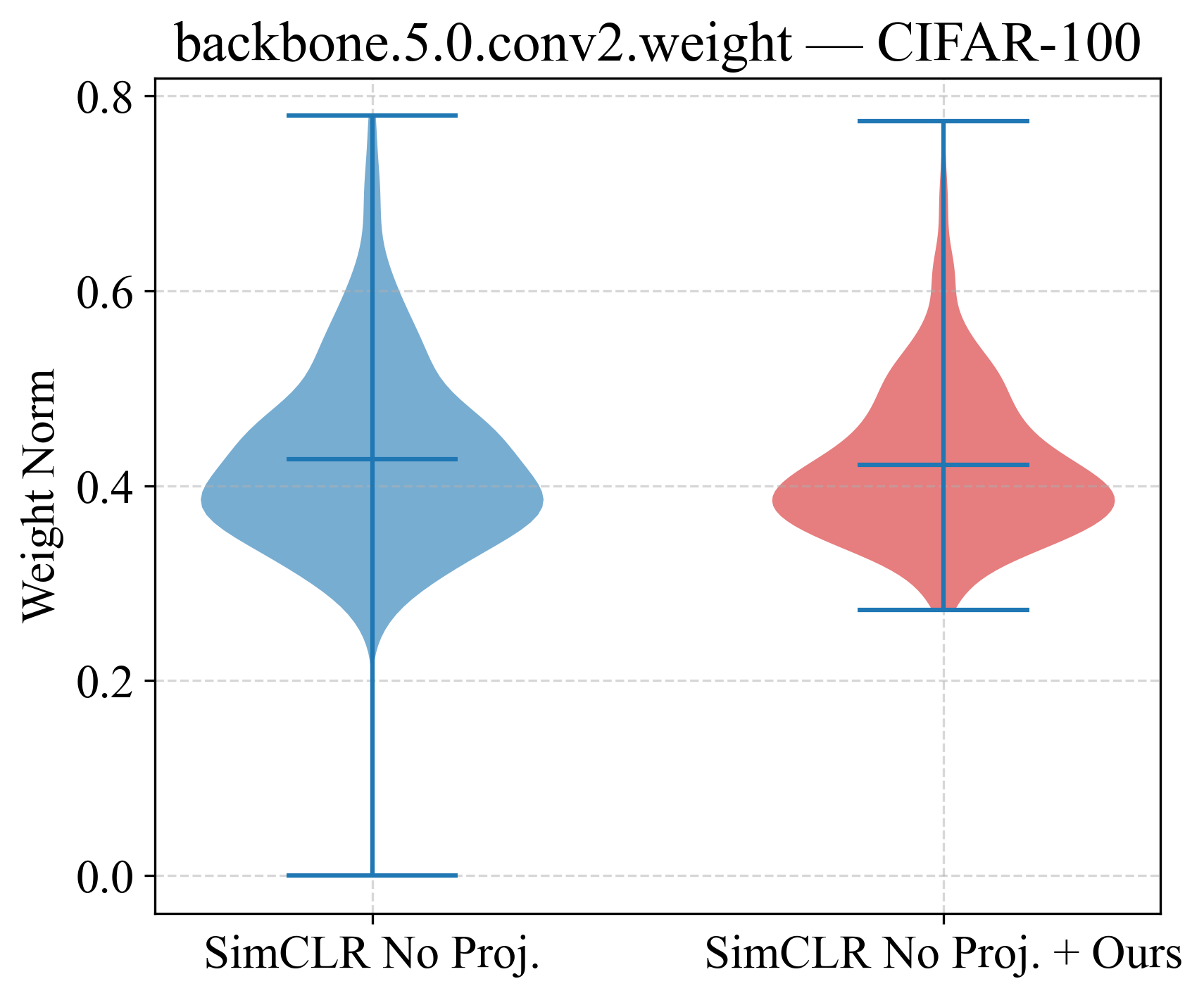}
      \caption{CIFAR100 - Last Layer - 1st Block - 2nd Conv}
      \label{fig:vio_proj_c100_502}
  \end{subfigure}
  \hfill
  \begin{subfigure}[b]{0.24\linewidth}
      \centering
    \includegraphics[width = 0.95\linewidth]{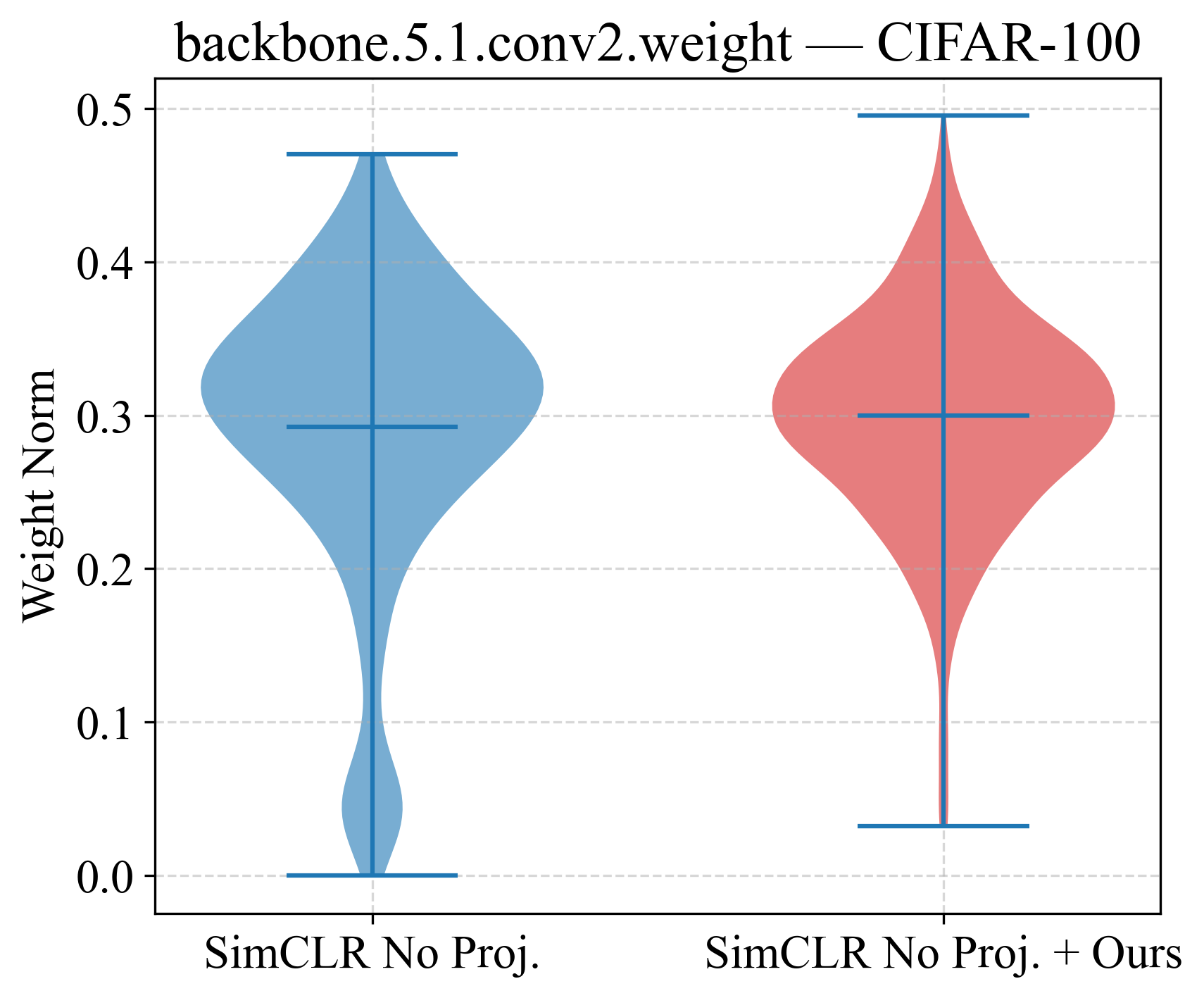}
      \caption{CIFAR100 - Last Layer - 2nd Block - 2nd Conv}
      \label{fig:vio_proj_c100_512}
  \end{subfigure}%
\caption{\textcolor{black}{Comparison of channel-wise weight norm distribution of SimCLR without projector trained without (blue) and with (red) our proposed weight regularization in the last layer for the penultimate and final layers of the projector. The distribution of weight norms of SimCLR without the projector shows that the weights of the second convolutional layer of the second ResNet block in the last layer \textbf{go very close to 0.0}. Whereas the proposed method can raise the minimum \textbf{away from 0.0} despite having similar mean values. Best viewed at 200\%.}}
  \label{fig:vio_woproj}
\end{figure}

\begin{figure}[!ht]
  \centering
  \begin{subfigure}[b]{0.24\linewidth}
      \centering
    \includegraphics[width = 0.95\linewidth]{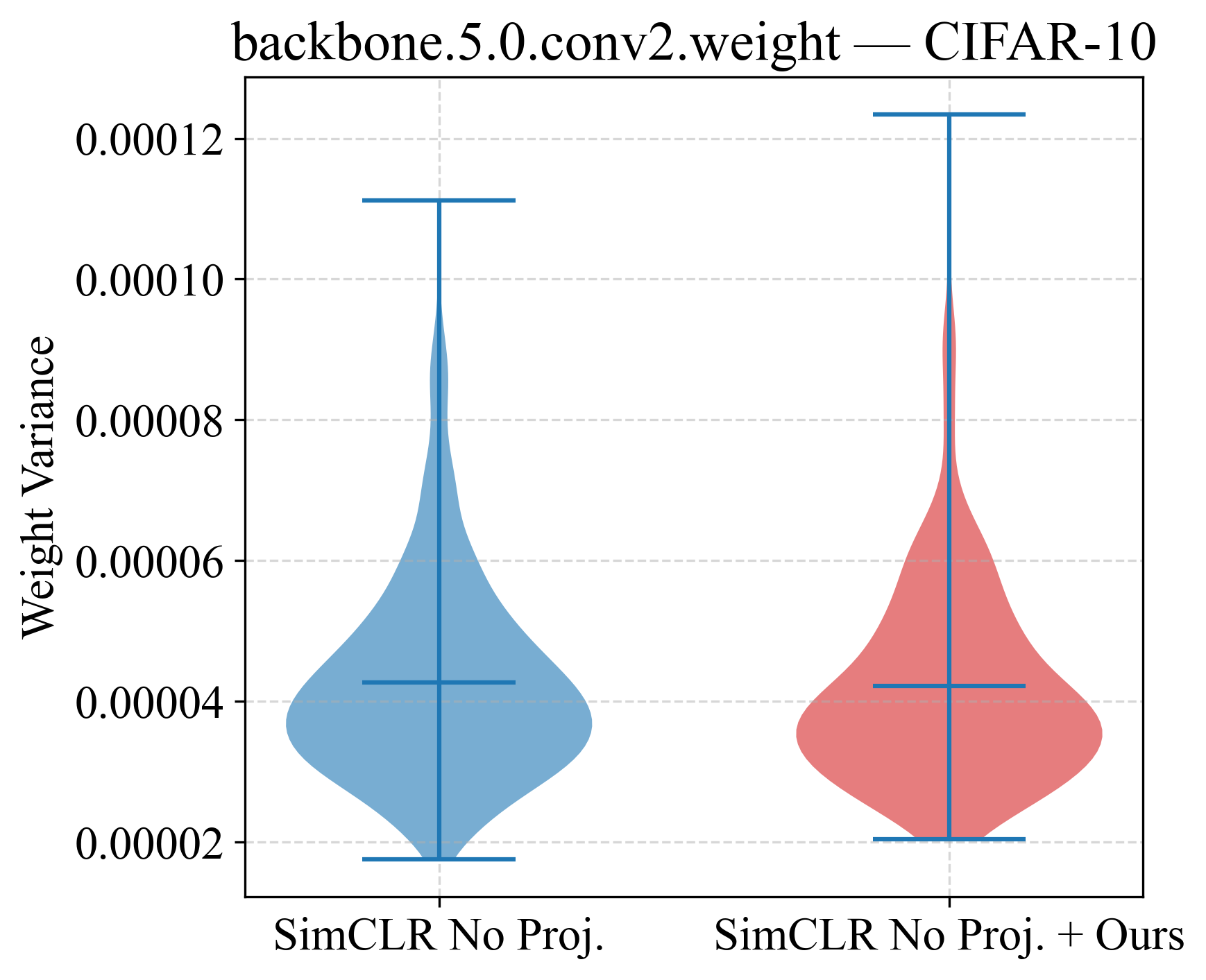}
      \caption{CIFAR10 - Last Layer - 1st Block - 2nd Conv}
      \label{fig:vio_var_proj_c10_502}
  \end{subfigure}
  \hfill
  \begin{subfigure}[b]{0.24\linewidth}
      \centering
    \includegraphics[width = 0.95\linewidth]{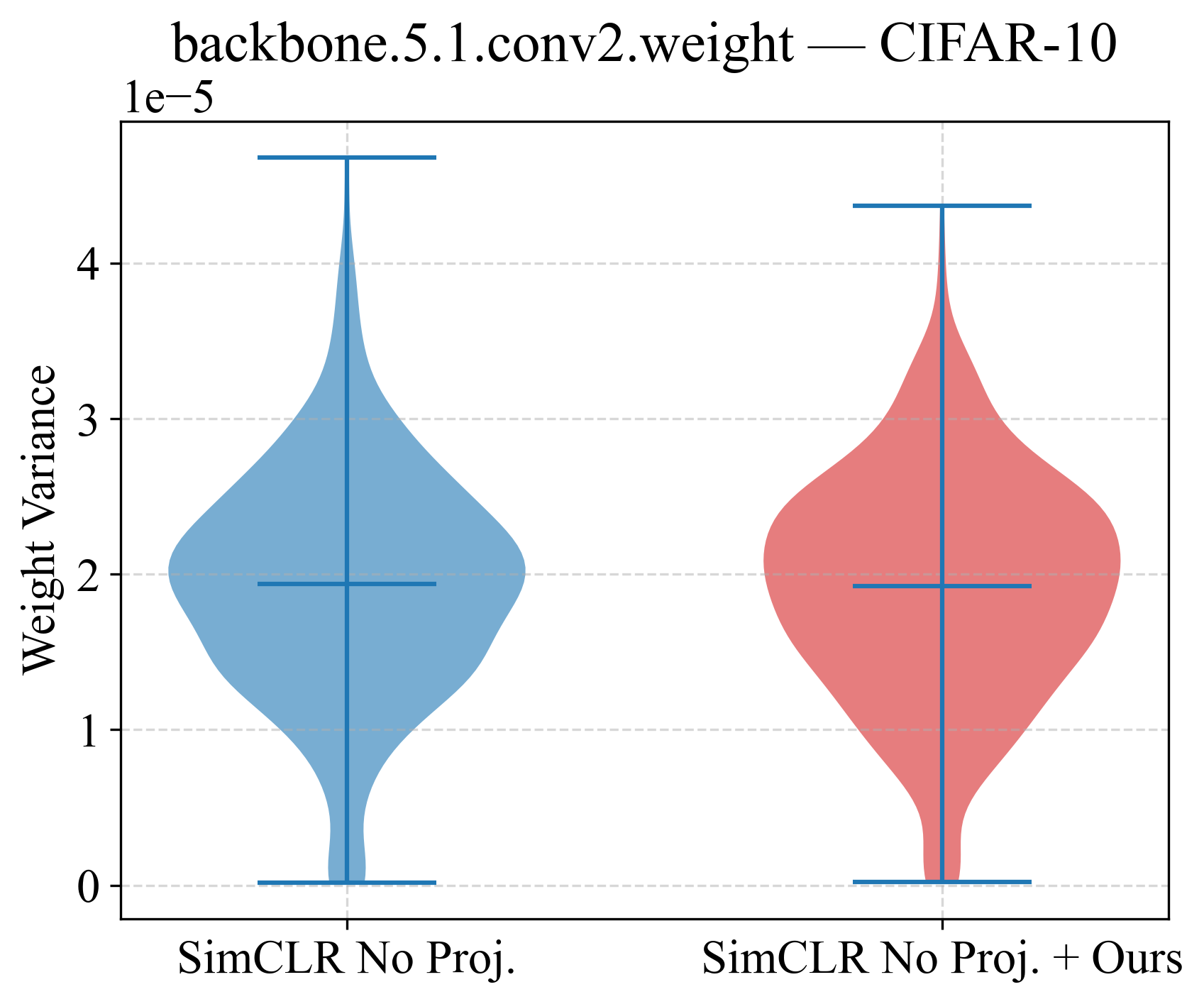}
      \caption{CIFAR10 - Last Layer - 2nd Block - 2nd Conv}
      \label{fig:vio_var_proj_c10_512}
  \end{subfigure}%
  \hfill
  \begin{subfigure}[b]{0.24\linewidth}
      \centering
    \includegraphics[width = 0.95\linewidth]{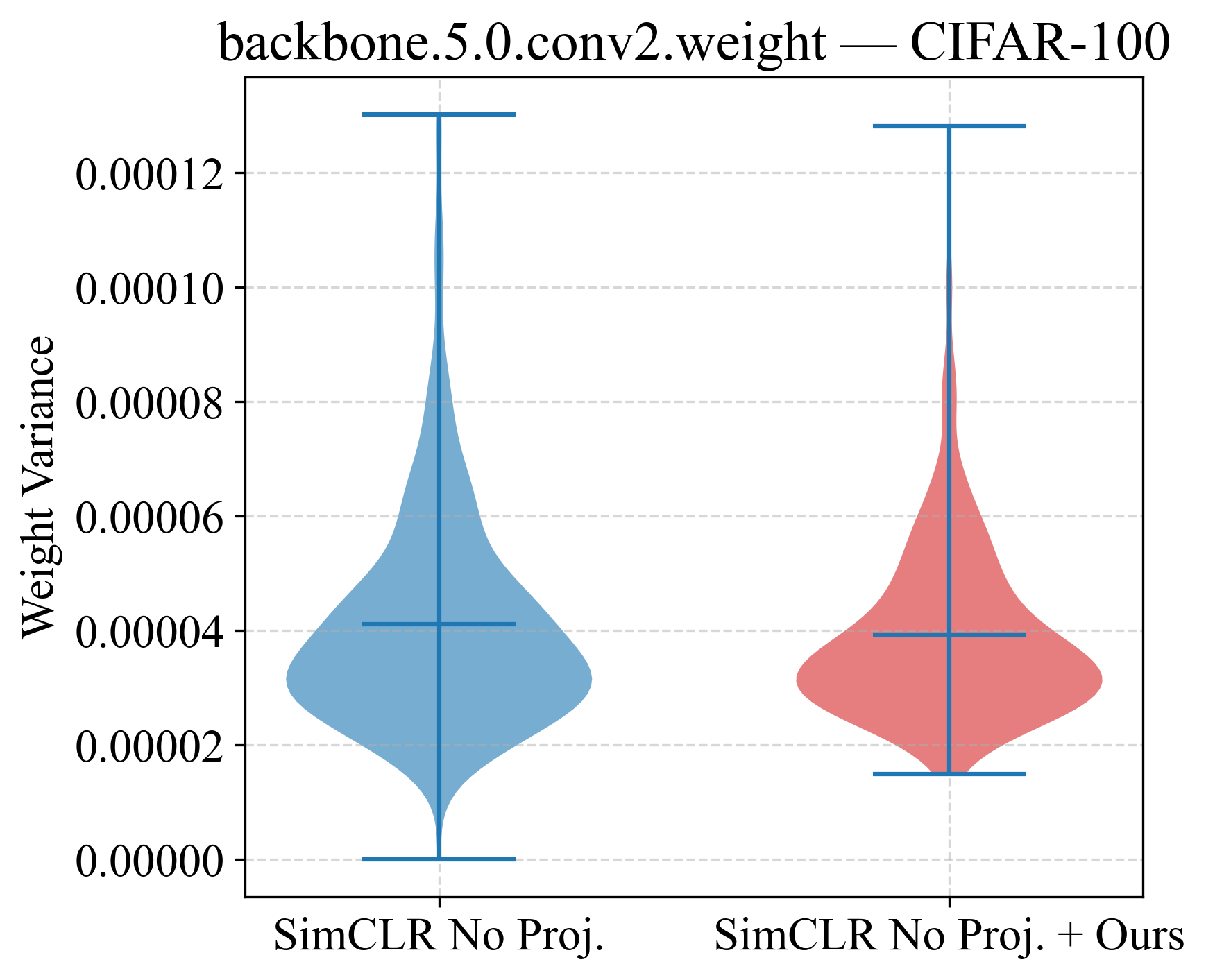}
      \caption{CIFAR100 - Last Layer - 1st Block - 2nd Conv}
      \label{fig:vio_var_proj_c100_502}
  \end{subfigure}
  \hfill
  \begin{subfigure}[b]{0.24\linewidth}
      \centering
    \includegraphics[width = 0.95\linewidth]{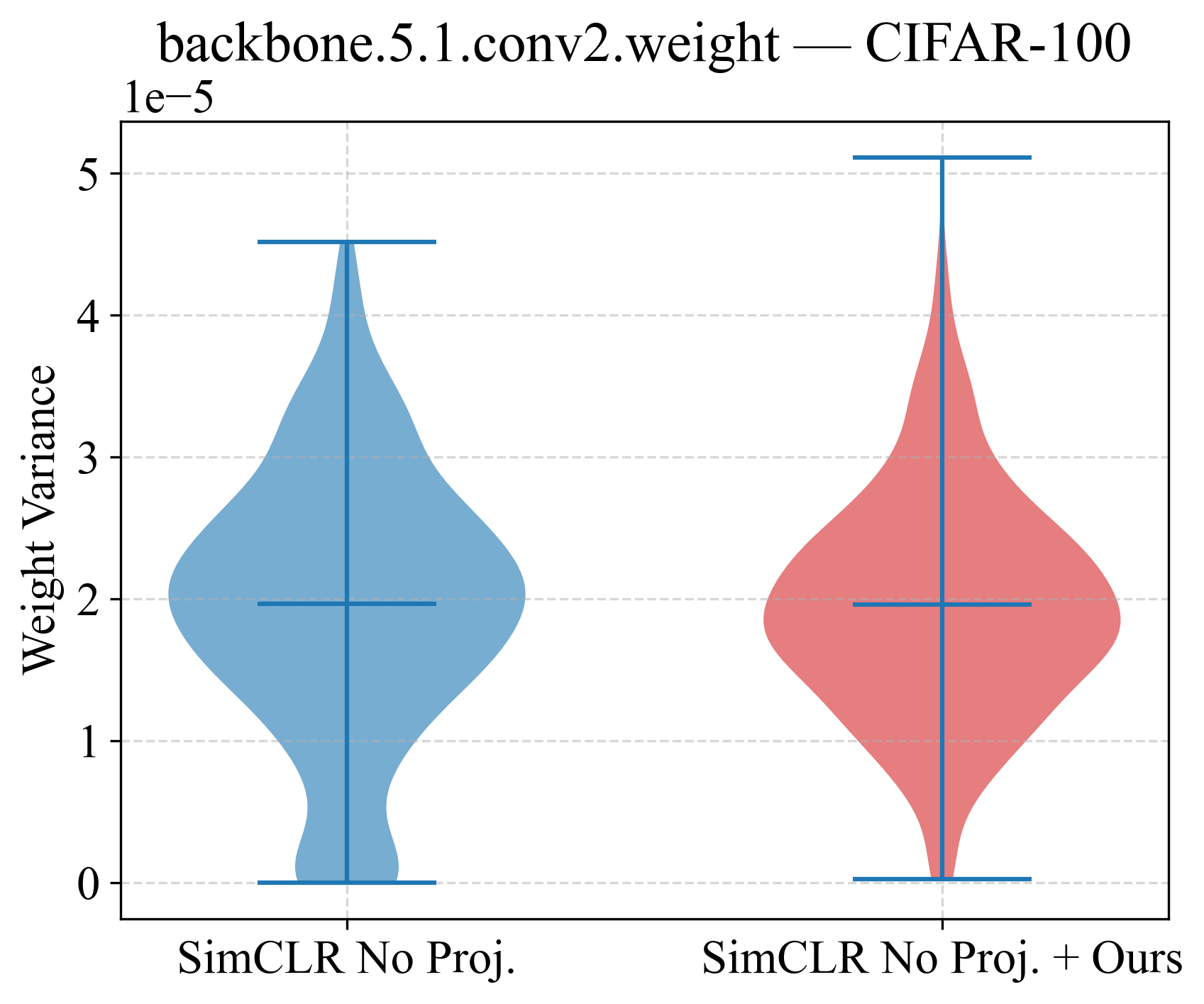}
      \caption{CIFAR100 - Last Layer - 2nd Block - 2nd Conv}
      \label{fig:vio_var_proj_c100_512}
  \end{subfigure}%
\caption{\textcolor{black}{Comparison of channel-wise unbiased variance of weight of SimCLR without projector trained without (blue) and with (red) our proposed weight regularization in the last layer for the penultimate and final layers of the projector. The distribution of unbiased variances of the weight of the SimCLR without the projector shows that the weights of the second convolutional layer of the second resnet block in the last layer \textbf{go very close to 0.0}. Whereas the proposed method can raise the minimum \textbf{away from 0.0} despite having similar mean values. Best viewed at 200\%.}}
  \label{fig:vio_var_woproj}
\end{figure}

\textcolor{black}{\paragraph{Empirical Justification:} We look at the distribution of the channel-wise weight norms of the convolutional layers in the penultimate and last layer of the encoder (Fig. \ref{fig:vio_woproj}), trained using the SimCLR framework, but without a projector. We can observe that there are several samples in the distributions which have values very close to zero, indicating the presence of collapsed dimensions. Additionally, we also present the effect of applying a weight regularization in the last layer only when trained without a projector in the same figure.}

\textcolor{black}{The importance of these empirical findings lies in the fact that they directly corroborate the ``Key Takeaway 1'' mentioned previously. By definition, the last layer corresponds to learning the high-level and more complex representations. Having a negligible weight norm and a negligible variance reduces the representation learning capacity of the layers in concern. From Figs. \mbox{\ref{fig:vio_woproj}} and \mbox{\ref{fig:pen_enc_c100}}, we can combine the empirical results to infer that in the absence of a projector, both the covariance of the embedding dimensions and the weight norm become negligible along a few dimensions. Hence, considering that the concerned weights are from the last layer, we can safely state that the kernels in the deeper layers are incapable of learning useful information in the absence of a projector.}

\subsubsection{Propagation of Low-Rank Representations}
\label{subsubsec:proplowrank}

In the case of a perfect decorrelating effect of the InfoNCE loss as discussed in the previous subsection, the flow of information would be maximized, resulting in better semantic representation learning and consequently better downstream performance. \textit{But the decorrelation effect of the InfoNCE on the encoder output embeddings does not maximize the flow of information through the encoder output dimensions}. A similar approach to prevent dimensional collapse was also presented in WeRank \citep{pasand2024werank}, where the authors used weight regularization by whitening the weight covariance matrix.

\paragraph{\textbf{Empirical justification and observations}}: We study the singular value plots of the penultimate and final layer of the encoder backbone in Fig. \ref{fig:l4n5_wnwoproj}.
  We see an increase in the number of high-valued singular values in the final layer of the backbone when using a projector. However, the eigenvalue spectrum is almost unchanged with or without a projector in the penultimate layer of the ResNet backbone encoder. \textit{Thus, we can say that the effect of rank reduction is observed only on the final layer, when not using a separate projector, and the final layer of the backbone acts as the make-shift or pseudo projector}. However, according to our observation, it is wise to say that the rank reduction effect is observed on the layer from which the final embedding is taken for contrastive loss computation, while the eigenspectrum of the previous layer output embeddings shows almost no change. This is confirmed from the plots of eigenvalues in Fig. \ref{fig:pen_proj}, where we observe that the eigenvalues for the last layer of the projector are significantly low in magnitude.

 \begin{figure*}
      \centering
      \begin{subfigure}[b]{0.32\linewidth}
          \centering
        \includegraphics[width = 0.9\linewidth]{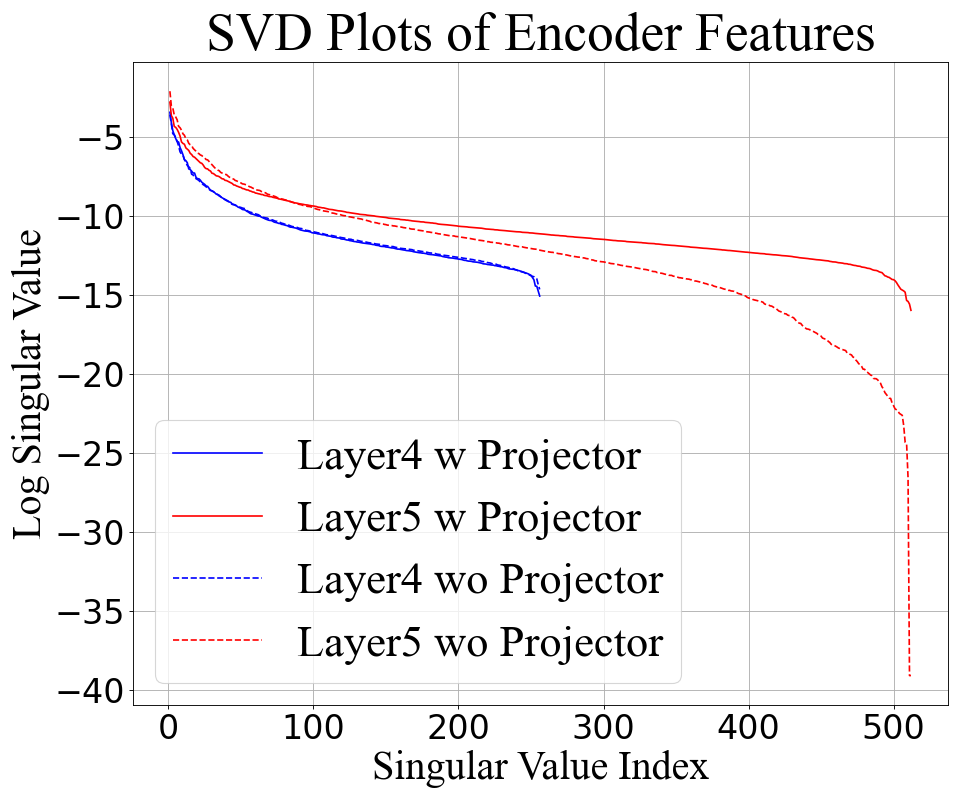}
          \caption{CIFAR10}
          \label{fig:pen_enc_c10}
      \end{subfigure}%
      \hfill
      \begin{subfigure}[b]{0.32\linewidth}
          \centering
        \includegraphics[width = 0.9\linewidth]{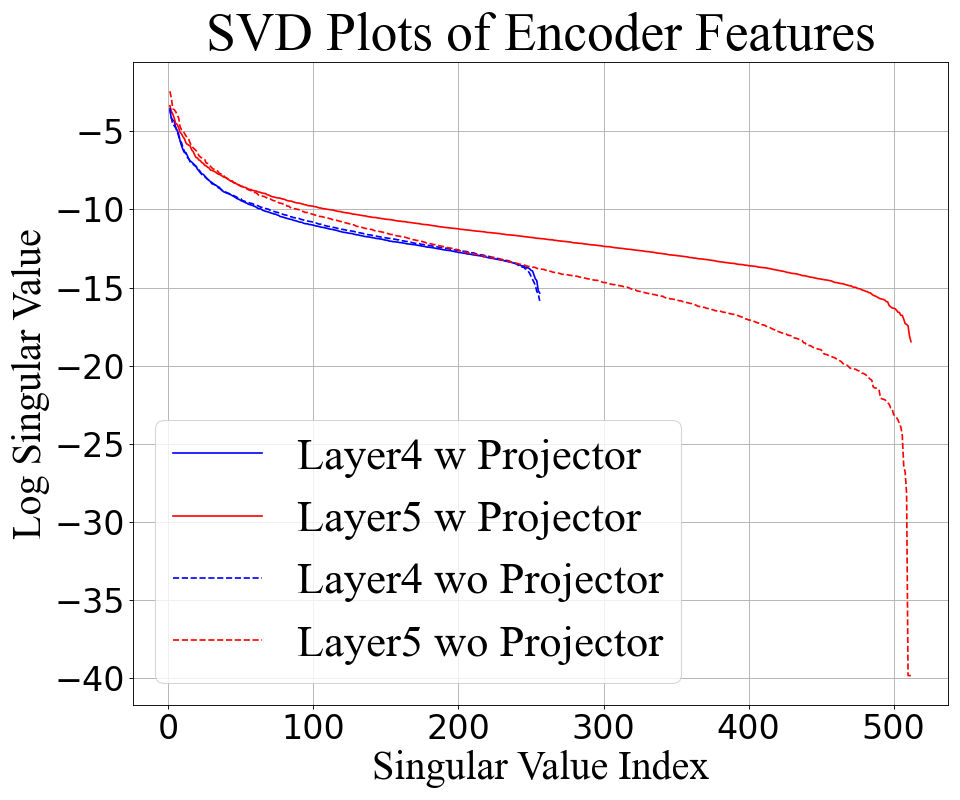}
          \caption{CIFAR100}
          \label{fig:pen_enc_c100}
      \end{subfigure}%
      \hfill
      \begin{subfigure}[b]{0.32\linewidth}
          \centering
        \includegraphics[width = 0.9\linewidth]{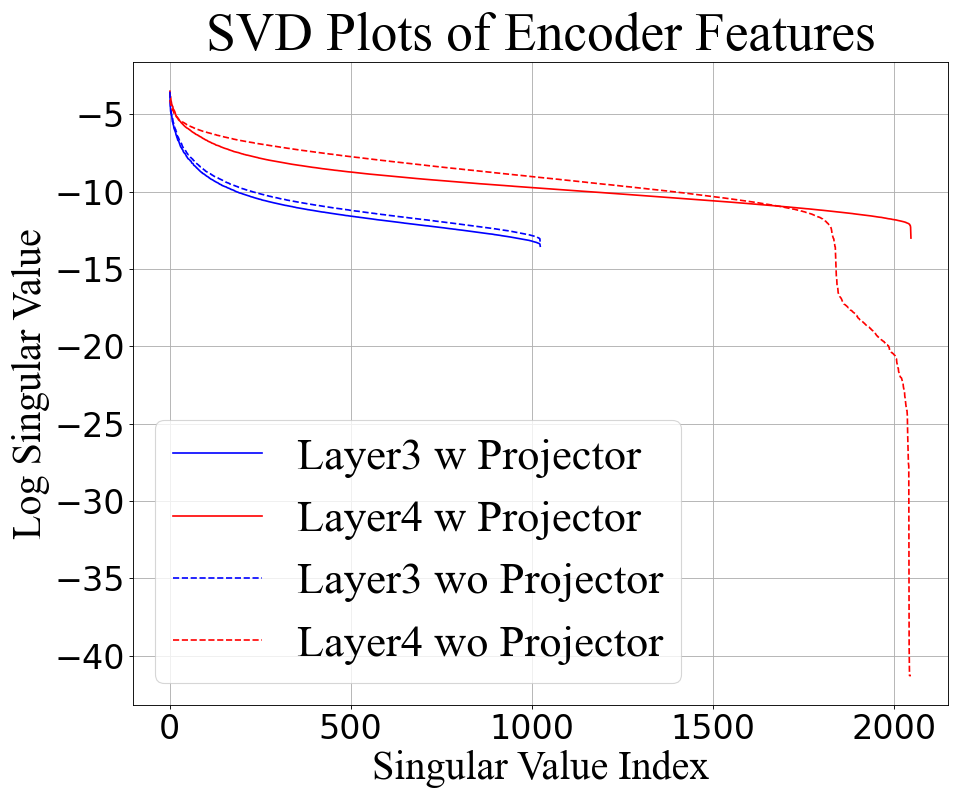}
          \caption{ImageNet100}
          \label{fig:pen_enc_in100}
      \end{subfigure}
    \caption{Singular values plots of the penultimate and final layers of the encoder with and without the projector. \textcolor{black}{The plots above show the effect of the presence and absence of the projector on the last and the penultimate layer of the encoder. The last layer (Layer5) shows a clear dimensional collapse without the projector in all three datasets, CIFAR10, CIFAR100 and ImageNet100. However, there is a negligible change in the singular value plot for the penultimate layer (Layer4), which agrees with our previous observation that the decorrelation effect is not fully propagated towards the shallower layers.} Best viewed at 300\%.}
      \label{fig:l4n5_wnwoproj}
  \end{figure*}

   \begin{figure*}[h!]
      \centering
      \begin{subfigure}[b]{0.32\linewidth}
          \centering
        \includegraphics[width = 0.9\linewidth]{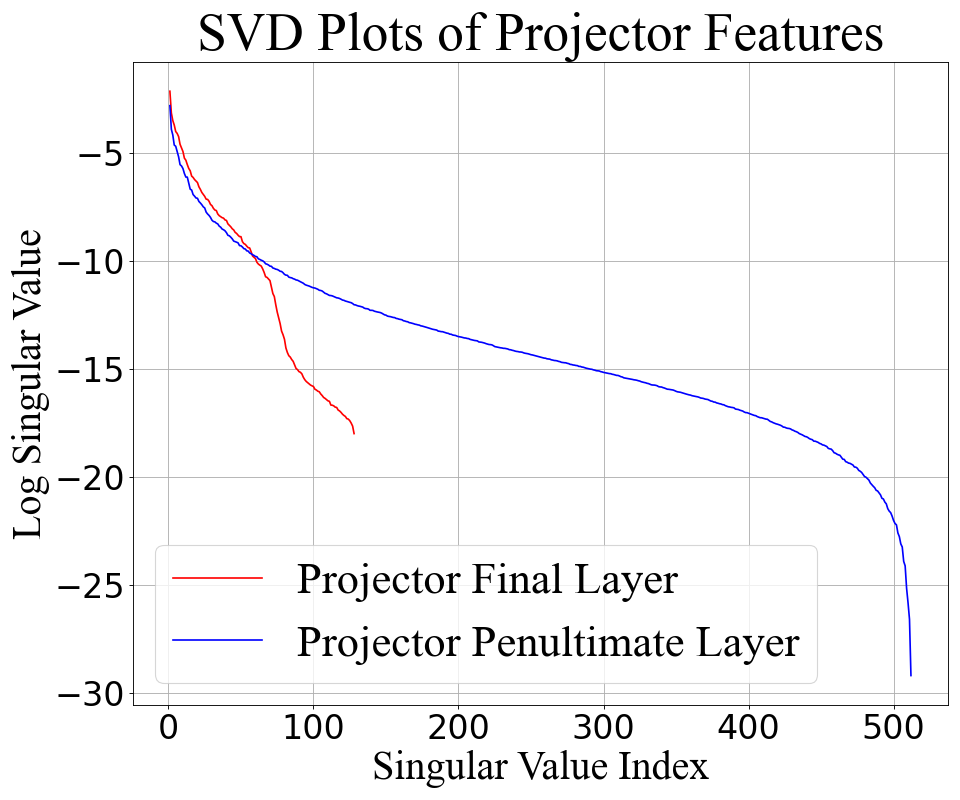}
          \caption{CIFAR10}
          \label{fig:pen_proj_c10}
      \end{subfigure}
      \hfill
      \begin{subfigure}[b]{0.32\linewidth}
          \centering
        \includegraphics[width = 0.9\linewidth]{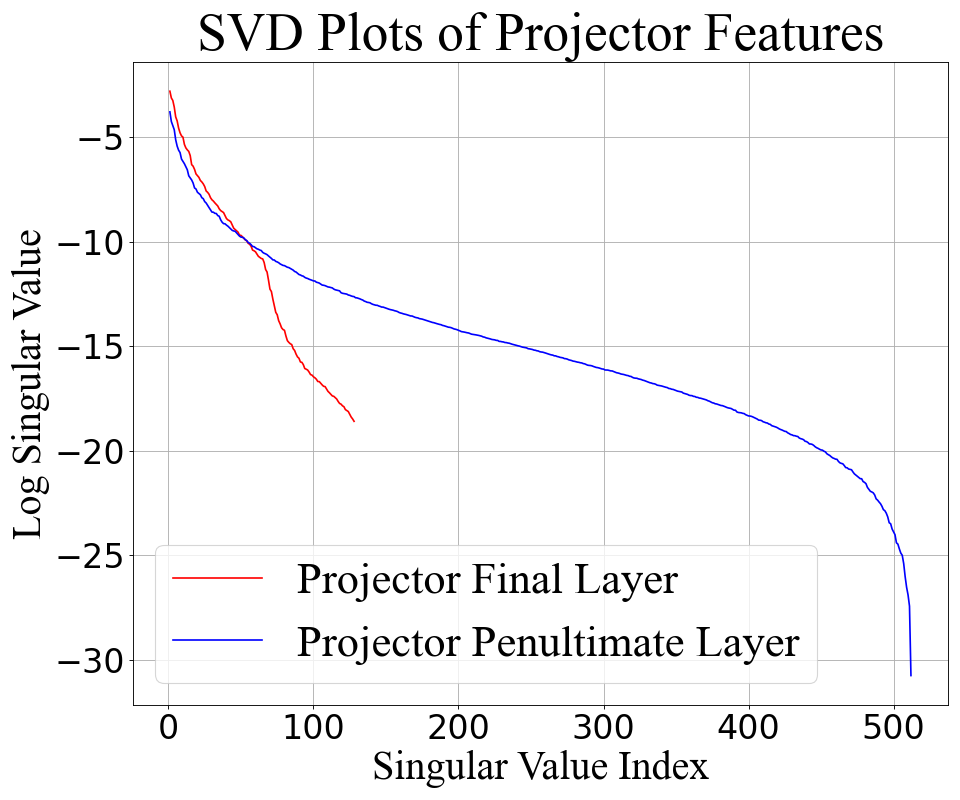}
          \caption{CIFAR100}
          \label{fig:pen_proj_c100}
      \end{subfigure}
      \hfill
      \begin{subfigure}[b]{0.32\linewidth}
          \centering
        \includegraphics[width = 0.9\linewidth]{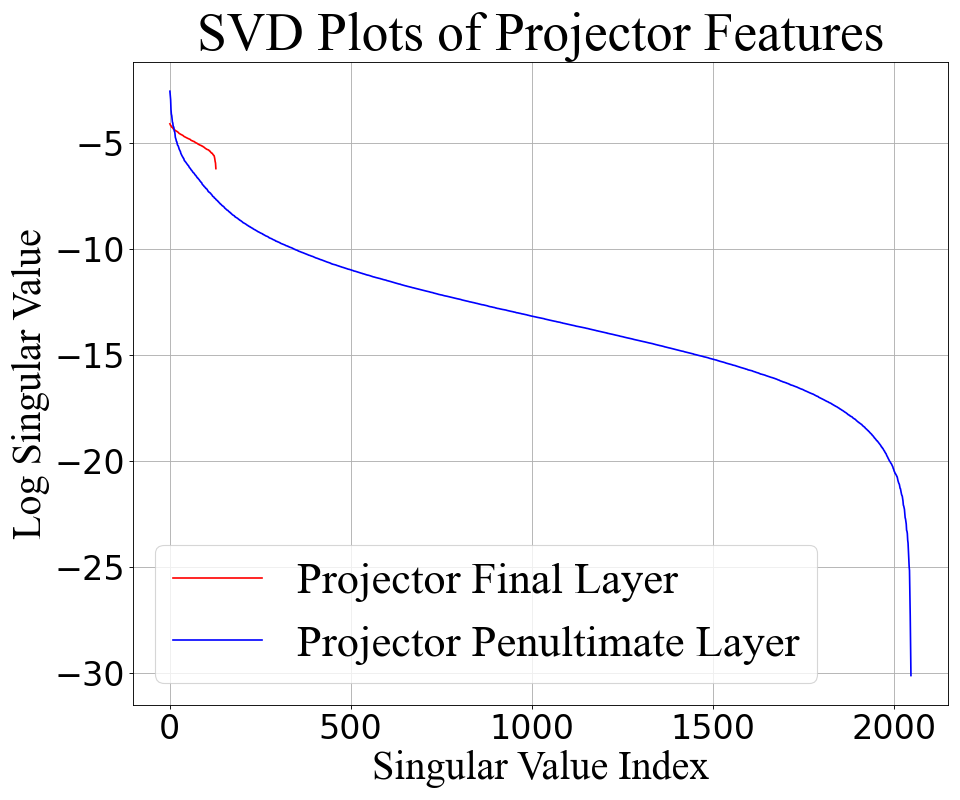}
          \caption{ImageNet100}
          \label{fig:pen_proj_in100}
      \end{subfigure}
    \caption{Singular values plots of the penultimate and final layers of the projector. Best viewed at 300\%.}
      \label{fig:pen_proj}
  \end{figure*}

\paragraph{\textbf{Why are full-rank embeddings better than low-rank embeddings?}} The trick used in DirectCLR is to leave a part of the output vector $z$ randomly initialized, theoretically making the variance of those dimensions non-zero. This causes the encoder to learn representations that are effectively higher-dimensional. Therefore, separability is also better according to \textit{Cover's theorem} \citep{coverstheorem}. Whereas, when the rank gets reduced, the representations are mapped to a low-dimensional subspace which \textit{should} reduce the probability that the mapped instances are linearly separable while still being embedded in a high-dimensional space. 

{A common premise in the literature, often motivated by frameworks like RankMe \citep{garrido2023rankme} is that higher-rank embeddings directly correlate with superior downstream performance. RankMe evaluates a within-method, cross-hyperparameter relationship between representation rank and linear evaluation performance within a single, consistent feature space. In contrast, evaluating pre-projector features $h \in \mathbb{R}^{D_h}$ versus post-projector features $z \in \mathbb{R}^{D_z}$ within the same trained network involves projections across spaces with fundamentally different ambient dimensions ($D_h \neq D_z$) and distinct functional roles.  The projection head intentionally maps $h$ into $z$ to isolate loss-specific decorrelation, whereas $h$ retains a richer, multi-dimensional semantic hierarchy optimized for general linear separability. Because $h$ and $z$ reside in distinct ambient spaces, comparing their absolute ranks directly does not imply that $z$ is a superior representation overall. Instead, rank dynamics across the pre- and post-projector boundary reflect geometric transformations between distinct feature spaces rather than a naive "higher rank implies better representation" rule. }

{

\noindent \textbf{Proposition 2.} Let $x_{l-1} \in \mathbb{R}^{D_{l-1}}$ be a random vector with covariance $\Sigma_{l-1} = \mathrm{Cov}(x_{l-1}) \succ 0$. Let $x_l = W_{l-1} x_{l-1} \in \mathbb{R}^{D_l}$ have covariance $\Sigma_{x_l} = W_{l-1} \Sigma_{l-1} W_{l-1}^\top$ with spectral decomposition $\Sigma_{x_l} = \sum_{k=1}^{D_l} \lambda_k \mathbf{v}_k \mathbf{v}_k^\top$, where $\lambda_1 \ge \dots \ge \lambda_{D_l} > 0$. Let $h = W_l x_l \in \mathbb{R}^{D_o}$ with output covariance $\Sigma_h = \operatorname{Cov}(h) = W_l \Sigma_{x_l} W_l^\top \in \mathbb{R}^{D_o \times D_o}$.

Suppose the $i$-th eigenvalue of $\Sigma_h$ exhibits practical collapse, i.e., $\lambda_i(\Sigma_h) \le \varepsilon$. Then:
\begin{itemize}
    \item $\lambda_i(\Sigma_h) \le \varepsilon$ does not imply that $\operatorname{rank}(W_{l-1}) < D_l$ or that $\Sigma_{l-1}$ is ill-conditioned.
    \item In particular, for any well-conditioned input $\Sigma_{l-1} \succ 0$ and full-rank matrix $W_{l-1}$, $\lambda_i(\Sigma_h) \le \varepsilon$ occurs if and only if the projected eigenvector $\mathbf{w}_i = W_l^\top \mathbf{u}_i$ satisfies:
    \begin{equation}
        \sum_{k=1}^{D_l} \lambda_k \left( \mathbf{u}_i^\top W_l \mathbf{v}_k \right)^2 \le \varepsilon
    \end{equation}
    where $\mathbf{u}_i$ is the $i$-th orthonormal eigenvector of $\Sigma_h$.
\end{itemize}

\noindent \textbf{Proof:} 
We assume that the network under consideration is consisted of two consecutive layers in a linear network without skip connections or activation functions:
\begin{equation}
    x_l = W_{l-1} x_{l-1}, \quad h = W_l x_l = W_l W_{l-1} x_{l-1}
\end{equation}
where $x_{l-1} \in \mathbb{R}^{D_{l-1}}$, $x_l \in \mathbb{R}^{D_l}$, and $h \in \mathbb{R}^{D_o}$ are column vectors. Let $\Sigma_{l-1} = \mathbb{E}\left[(x_{l-1} - \mathbb{E}[x_{l-1}])(x_{l-1} - \mathbb{E}[x_{l-1}])^T\right] \in \mathbb{R}^{D_{l-1} \times D_{l-1}}$ denote the covariance matrix of $x_{l-1}$. The representation covariance matrix $\Sigma_h = \mathrm{Cov}(h) \in \mathbb{R}^{D_o \times D_o}$ is given by:
\begin{equation}
    \Sigma_h = W_l W_{l-1} \Sigma_{l-1} W_{l-1}^T W_l^T
    \label{eqn:cov_h_spectral}
\end{equation}

Let $\Sigma_h = U \Lambda U^T$ be the spectral decomposition of $\Sigma_h$, where $U = [\mathbf{u}_1, \dots, \mathbf{u}_{D_o}] \in \mathbb{R}^{D_o \times D_o}$ is an orthogonal matrix of eigenvectors ($U^T U = I$) and $\Lambda = \mathrm{diag}(\lambda_1, \dots, \lambda_{D_o})$ contains the output eigenvalues ordered as $\lambda_1 \ge \lambda_2 \ge \dots \ge \lambda_{D_o} \ge 0$.

By Rayleigh-Ritz \citep{Trefethen2022NLA}, the $i$-th eigenvalue $\lambda_i(\Sigma_h)$ corresponds to the variance along the $i$-th principal component direction $\mathbf{u}_i$:
\begin{equation}
    \lambda_i(\Sigma_h) = \mathbf{u}_i^T \Sigma_h \mathbf{u}_i = \mathbf{u}_i^T \left( W_l W_{l-1} \Sigma_{l-1} W_{l-1}^T W_l^T \right) \mathbf{u}_i = \left\| \mathbf{u}_i^T W_l W_{l-1} \Sigma_{l-1}^{\frac{1}{2}} \right\|_2^2
    \label{eqn:lambdalcovh}
\end{equation}

We can safely state that, practical dimensional collapse along the $i$-th principal component occurs when $\lambda_i(\Sigma_h) \le \epsilon$ for a small threshold $\epsilon > 0$. This yields:
\begin{equation}
    \left\| \mathbf{u}_i^T W_l W_{l-1} \Sigma_{l-1}^{\frac{1}{2}} \right\|_2 \le \sqrt{\epsilon} = \epsilon'
    \label{eqn:spectral_collapse_bound}
\end{equation}

The above Eqn. \ref{eqn:spectral_collapse_bound} reveals two distinct structural mechanisms through which collapse occurs:

\noindent
\textbf{Case 1 (Spectral Misalignment with Feature Covariance):}
Let $\Sigma_{l-1} = V \Lambda_{l-1} V^T = \sum_{k=1}^{D_{l-1}} \mu_k \mathbf{v}_k \mathbf{v}_k^T$ be the spectral decomposition of $\Sigma_{l-1}$, where $\mu_k \ge 0$ are the input eigenvalues and $\mathbf{v}_k$ are the orthonormal eigenvectors. From Eqn. \ref{eqn:lambdalcovh}, we expand the expression for $\lambda_i(\Sigma_h)$ yielding:
\begin{equation}
    \lambda_i(\Sigma_h) = \sum_{k=1}^{D_{l-1}} \mu_k \left( \mathbf{u}_i^T W_l W_{l-1} \mathbf{v}_k \right)^2 \le \epsilon
\end{equation}
If $\lambda_i(\Sigma_h) \le \epsilon$, the projected vector $\mathbf{u}_i^T W_l W_{l-1}$ lies in the approximate left nullspace of $\Sigma_{l-1}^{\frac{1}{2}}$, denoted as $\mathbf{u}_i^T W_l W_{l-1} \in \mathbf{\text{Null}}_{\epsilon'}(\Sigma_{l-1}^{\frac{1}{2}})$. This occurs when $\mathbf{u}_i^T W_l W_{l-1}$ is orthogonally misaligned with all dominant eigenvectors $\mathbf{v}_k$ corresponding to large eigenvalues $\mu_k \gg \epsilon$.

\noindent
\textbf{Case 2 (Inter-Layer Weight Misalignment or Shrinkage):}
Applying the Rayleigh-Ritz bounds \citep{Trefethen2022NLA} with respect to the spectrum of $\Sigma_{l-1}$:
\begin{equation}
    \mu_{\min}(\Sigma_{l-1}) \left\| \mathbf{u}_i^T W_l W_{l-1} \right\|_2^2 \le \lambda_i(\Sigma_h) \le \mu_{\max}(\Sigma_{l-1}) \left\| \mathbf{u}_i^T W_l W_{l-1} \right\|_2^2
\end{equation}
When $\Sigma_{l-1}$ is well-conditioned ($\mu_{\min}(\Sigma_{l-1}) > 0$), $\lambda_i(\Sigma_h) \le \epsilon$ forces the effective weight norm to be bounded:
\begin{equation}
    \left\| \mathbf{u}_i^T W_l W_{l-1} \right\|_2 \le \sqrt{\frac{\epsilon}{\mu_{\min}(\Sigma_{l-1})}} = \epsilon_l
\end{equation}
This establishes that $\mathbf{u}_i^T W_l$ lies in the approximate left nullspace of $W_{l-1}$, i.e., $\mathbf{u}_i^T W_l \in \mathbf{\text{Null}}_{\epsilon_l}(W_{l-1})$.

\noindent
From these spectral derivations, dimensional collapse ($\lambda_i(\Sigma_h) \le \epsilon$) originates from three non-unique mechanisms:
\begin{enumerate}
    \item \textbf{Inter-Layer Weight Shrinkage:} $\|\mathbf{u}_i^T W_l\|_2$ is small, that is, vanishing weight norm along output eigenvector $\mathbf{u}_i$.
    \item \textbf{Inter-Layer Spectral Misalignment:} $\mathbf{u}_i^T W_l$ aligns with the left approximate nullspace of $W_{l-1}$.
    \item \textbf{Feature Space Misalignment:} $\mathbf{u}_i^T W_l W_{l-1}$ aligns with the low-variance eigenspace of $\Sigma_{l-1}$.
\end{enumerate}

\paragraph{Key Takeaway 2: Dimensional collapse does not necessarily imply rank-deficient representations or weights in shallower layers.} 
A collapsed principal component mode of the representation at layer $l$, characterized by a near-zero output eigenvalue ($\lambda_i(\Sigma_h) \le \epsilon$), arises from multiple, non-exclusive spectral mechanisms rather than a single structural bottleneck. First, even when the input covariance $\Sigma_{l-1}$ and intermediate weights $W_{l-1}$ are strictly full-rank and well-conditioned, collapse along an output principal direction $\mathbf{u}_i$ can occur due to inter-layer spectral misalignment, where the projected weight vector $\mathbf{u}_i^T W_l$ falls into the approximate left nullspace of $W_{l-1}$. Second, collapse occurs when the composite transformation $\mathbf{u}_i^T W_l W_{l-1}$ is spectrally misaligned with the dominant eigenspaces of the input covariance $\Sigma_{l-1}$, or when $W_l$ undergoes directional weight shrinkage along $\mathbf{u}_i$. Consequently, observing a low-rank or collapsed representation spectrum at layer $l$ does not automatically imply rank deficiency in shallower weight matrices ($W_{l-1}$) or input embeddings. This finding refutes the premise in WeRank \citep{pasand2024werank}, which attributes collapse primarily to low-rank weights in earlier layers, demonstrating instead that representation collapse is an unidentifiable phenomenon co-driven by inter-layer spectral misalignment and feature covariance interactions.

\paragraph{Mechanisms of Representation Collapse.} 
Prior work, like WeRank \citep{pasand2024werank}, attribute output representation collapse primarily to rank-deficient or low-rank weight matrices in earlier layers. However, our 4-cause taxonomy demonstrates that observing a collapsed representation spectrum ($\lambda_i(\Sigma_h) \le \epsilon$) at layer $l$ is fundamentally an \emph{unidentifiable} phenomenon that cannot be uniquely mapped to shallow-layer rank deficiency. Specifically, while output collapse can indeed stem from weight magnitude shrinkage (Cause 1) or data-inherited degeneracy (Cause 4), it also routinely occurs when earlier weight matrices $W_{l-1}$ and input feature covariances $\Sigma_{l-1}$ are strictly full-rank and well-conditioned. In this regime, collapse is independently driven by geometric alignment mechanisms: inter-layer spectral misalignment, where the active layer projects onto the approximate left nullspace of $W_{l-1}$ (Cause 2), or feature-space misalignment, where the composite transformation rotates orthogonally to the dominant eigenspace of $\Sigma_{l-1}$ (Cause 3). By failing to disentangle geometric directional alignment from matrix rank, prior frameworks oversimplify the diagnostics of collapse; our theoretical results prove that output spectral collapse provides no necessary condition on the rank or conditioning of earlier-layer weights.
}

\subsubsection{Does stopping information flow along some dimensions of the encoder output result in a better information bottleneck than using a Projector?}

We attempt to substantiate our aforementioned statements more concretely and propose a straightforward approach to enhance performance on self-supervised contrastive learning tasks without requiring a projector. According to Property 5 described in \cite{fang2024rethinking}, a subvector of fixed value is the same as dimensional collapse along the dimensions of the subvector.  We intend to stop the information flow, resulting in an enforced dimensional collapse to cause an information bottleneck without the projector by fixing the output of the embeddings along a few dimensions to a constant $k$. First, let us go through the notations. We only keep $d_0$ out of a total of $D$ dimensions as \textit{dynamic}, while making the rest ($ D-d_0 = d_r$) \textit{static} by assigning a constant output value to those dimensions of the embedding.
Making the $i$-th dimension of the encoder output  $h$, that is, $h_i$, static with a constant $k = 0$, stops the gradient flow through all paths connected directly or indirectly to $h_i$, and the rank of the covariance matrix $\mathcal{C}$ follows Eqn. \ref{eqn:rankb}. However, the weights $W^i$, which result in $h_i$, are still randomly initialized. Whereas, in DirectCLR, due to the randomized subvector, the rank of the covariance matrix does not collapse drastically (Eqn. \ref{eqn:ranka}).

\begin{minipage}{0.48\linewidth}
\begin{equation}
    \text{rank}(\mathcal{C}) \le d_0
    \label{eqn:rankb}
\end{equation}
\end{minipage}
\hfill
\begin{minipage}{0.48\linewidth}
\begin{equation}
    d_r \le \text{rank}(\mathcal{C}) \le d_r + d_0
    \label{eqn:ranka}
\end{equation}
\end{minipage}

\noindent

Thus, when a constant value is not assigned to the \textit{static} $d_r$ dimensions, the rank of the encoder output embedding $h$ or, consequently, the random variable $\mathcal{H}$ is less than when the \textit{static} $d_r$ dimensions are left untouched.



Increasing the value of $d_0$ reduces the explicit dimensional collapse enforced on the representation space by allowing the rank of the embedding covariance matrix to increase according to Eqn. \ref{eqn:rankb}. Let us denote the new value of $d_0$ and $d_r$ be indicated by $d'_0$ and $d'_r$, where $d'_0 > d_0$ and $d'_r < d_r$. The rank of the new embedding covariance matrix also increases. So, we may think that we are effectively increasing the shattering capacity by (a) increasing the rank of the embedding covariance matrix, (b) decreasing the degree of dimensional collapse \citep{fang2024rethinking}, and also (c) increasing the dimensionality of the representation learning subspace \citep{coverstheorem}. However, this is not the case as observed from Table \ref{tab:c10n100_incrank}. The gradient of the InfoNCE loss $\mathcal{L}$ with respect to an embedding $z_i$ is given by Eqn. \ref{eqn:difflczj}. 

\begin{minipage}{\linewidth}
\begin{equation}
\label{eqn:difflczj}
\scriptsize
    \centering
    \begin{split}
        \frac{\partial \mathcal{L}}{\partial z_i} & =  \left[ - \frac{z_{i^+}}{\tau} + \frac{\frac{z_{i^+}}{\tau} \cdot e^{s_{ii^+}} + \sum_{\substack{j=1\\j \neq i}}^{N} \frac{z_j}{\tau} \cdot e^{s_{ij}}}{e^{s_{ii^+}} + \sum_{\substack{j=1\\j \neq i}}^{N} e^{s_{ij}}} + \sum_{\substack{j=1\\j \neq i}}^{N} \frac{\frac{z_j}{\tau} \cdot e^{s_{ji}}}{e^{s_{jj^+}} + \sum_{\substack{k=1\\k \neq j}}^{N} e^{s_{jk}}}\right]
        = - \left[ \frac{z_{i^+}}{\tau} \left( 1 - p^{ii^+} \right) - \sum_{\substack{j=1\\ j \neq i}}^{N} \frac{z_j}{\tau} \left( p^{ij} + p^{ji}\right) \right]\\
    \end{split}
\end{equation}
\end{minipage}

\noindent The quantity $p^{ij}$ is the probability of the pair $(x_i, x_j)$ being predicted as a positive pair with the sample $x_i$ as the anchor.  The gradient along each dimension can be written as follows, 

\begin{equation}
\label{eqn:lossDimGrad}
    \frac{\partial \mathcal{L}}{\partial z_i^d} = 
    \begin{cases}
   - \frac{k}{\tau} \left[  \left( 1 - p^{ii^+} \right) - \sum_{\substack{j=1\\ j \neq i}}^{N}  \left( p^{ij} + p^{ji}\right) \right] \;\;\;\;\;\;\;\;\;\;\; \text{for }d_0 < d \leq d_0+d_r\\    
  - \left[ \frac{z_{i+}^{d}}{\tau} \left( 1 - p^{ii^+} \right) - \sum_{\substack{j=1\\ j \neq i}}^{N} \frac{z_j^d}{\tau} \left( p^{ij} + p^{ji}\right) \right] \;\; \text{for }d \leq d_0\\  
\end{cases}
\end{equation}

Therefore, $\frac{\partial \mathcal{L}}{\partial z_i^d} = 0$ if $k = 0$ for the sub-vector $h[d_0:d_0+d_r]$, whereas the gradient flows normally through the rest of the dimensions. Using a constant other than 0 causes a small gradient to flow through all $d_r$ dimensions. This gradient disrupts proper training of the kernel weights, since the gradient of the $d_r$ sub-vector $(h[d_0:d_0+d_r])$ points toward $\mathbbm{1}_{d_r}$, which will eventually lead to dimensional collapse or may lead all points to lie within an open ball of finite radius along each of the $d_r$ dimensions. The performance, in this case, is worse than training with zero value in the $d_r$ dimensions and did not contribute towards maximizing the flow of information through the $d_0$ dimensions, as in DirectCLR \citep{jing2022directclr} or our framework with $k = 0$, due to the injection of a non-converging gradient. Empirical results for the CIFAR datasets are provided in Table \ref{tab:c10n100_incrank}.

\begin{minipage}{\linewidth}
    \centering
    \captionof{table}{200-NN Accuracy for different values of $d_0$ and $d_r$ on CIFAR10 and CIFAR100 dataset.}
    \vspace{0.5em}
    \begin{tabular}{c|c|c|c|c}
    \toprule
    \multirow{2}{*}{$d_0$} & \multirow{2}{*}{$d_r$} & \multirow{2}{*}{Fixed Value} & CIFAR10 & CIFAR100 \\\cline{4-5}
         & & & \multicolumn{2}{c}{200-NN Acc.} \\\midrule
        200 & 312 & 0 & 83.20 & 49.4\\\midrule
        200 & 312 & $\frac{1}{\sqrt{512}}$ & 82.5 & 48.6\\\midrule
        200 & 312 & 1 & 78.9 & 41.1\\\midrule\midrule
        400 & 112 & 0 & 84.2 & 52.2\\\midrule
        400 & 112 & $\frac{1}{\sqrt{512}}$ & 84.0 & 50.9\\\midrule
        400 & 112 & 1 & 79.8 & 44.3\\\midrule\midrule
        480 & 32 & 0 & 84.7 &  52.5\\ \midrule
        480 & 32 & $\frac{1}{\sqrt{512}}$ & 84.4 & 51.8\\\midrule
        480 & 32 & 1 & 80.9 & 45.9\\ 
        \bottomrule
    \end{tabular}
    \label{tab:c10n100_incrank}
\end{minipage}%

\begin{figure*}[h!]
      \centering
      \begin{subfigure}[b]{0.24\linewidth}
          \centering
        \includegraphics[width = \linewidth]{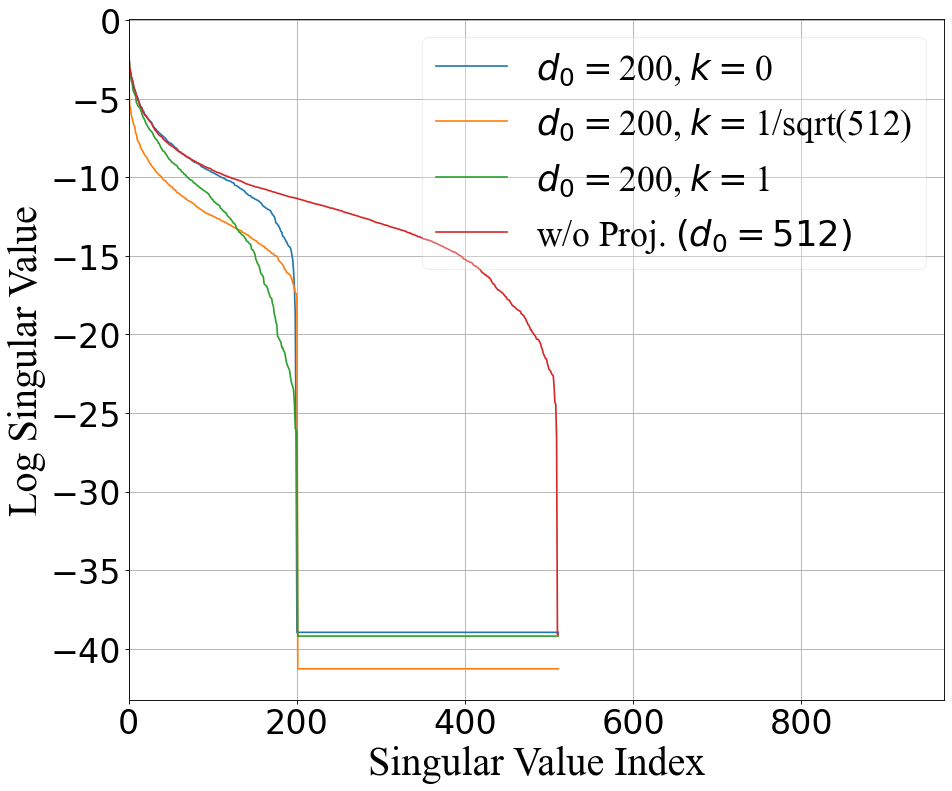}
          \caption{CIFAR10 ($d_0=200$)}
          \label{fig:200_c10}
      \end{subfigure}
      \hfill
      \begin{subfigure}[b]{0.24\linewidth}
          \centering
        \includegraphics[width = \linewidth]{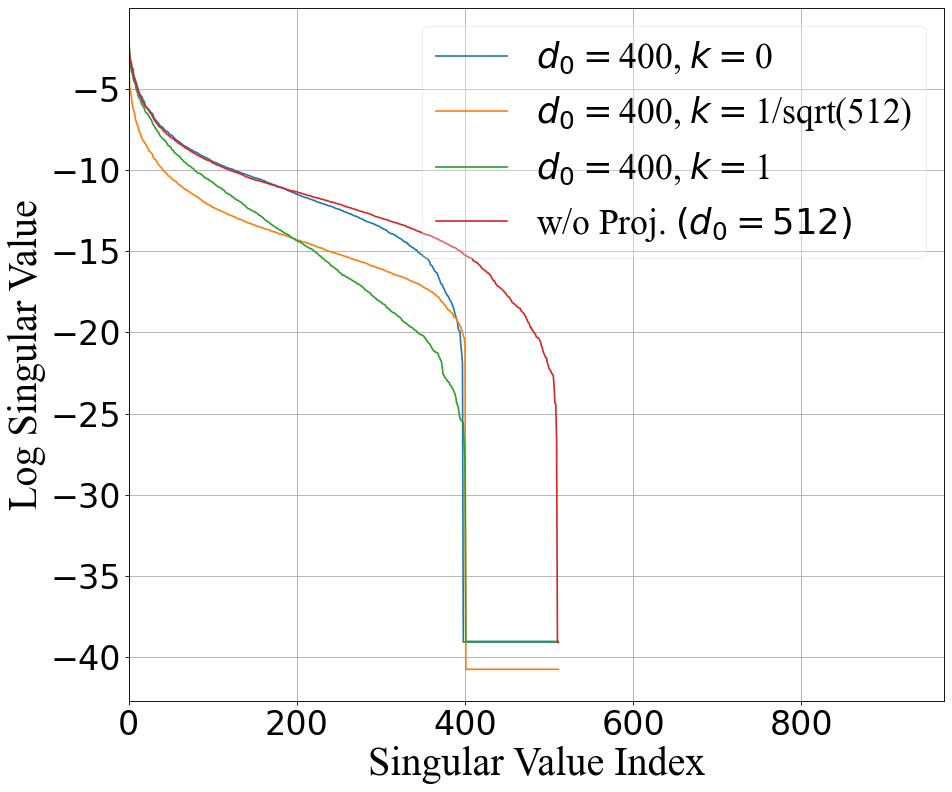}
          \caption{CIFAR10 ($d_0=400$)}
          \label{fig:400_c10}
      \end{subfigure}%
      \hfill
      \begin{subfigure}[b]{0.25\linewidth}
          \centering
        \includegraphics[width = 0.96\linewidth]{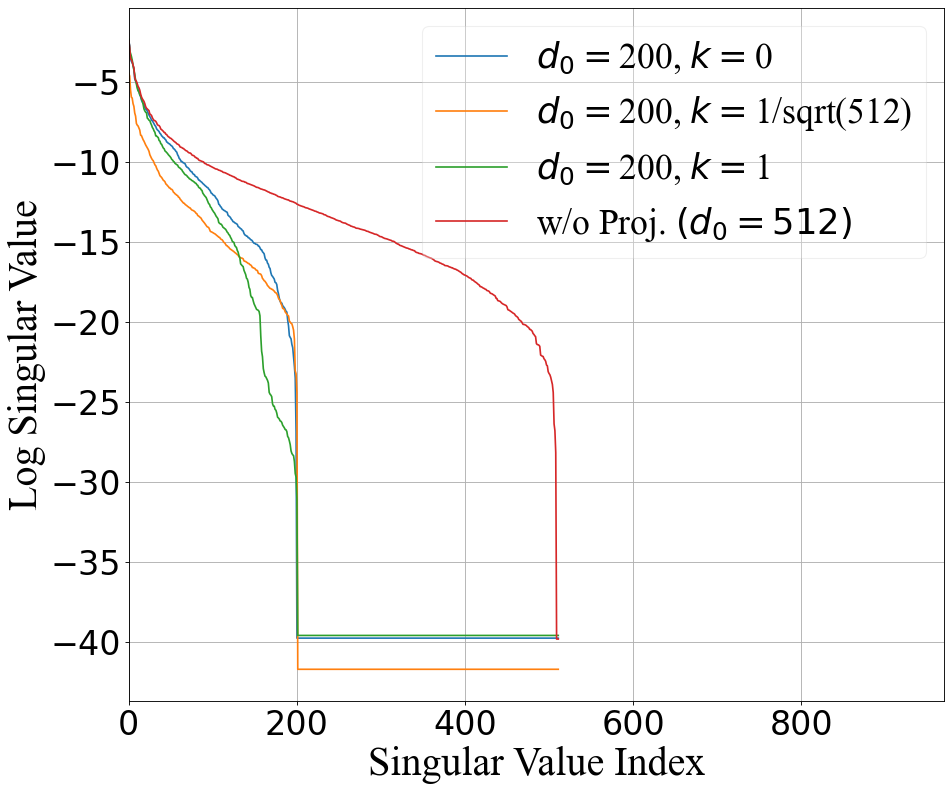}
          \caption{CIFAR100($d_0=200$)}
          \label{fig:200_c100}
      \end{subfigure}
      \hfill
      \begin{subfigure}[b]{0.25\linewidth}
          \centering
        \includegraphics[width = 0.96\linewidth]{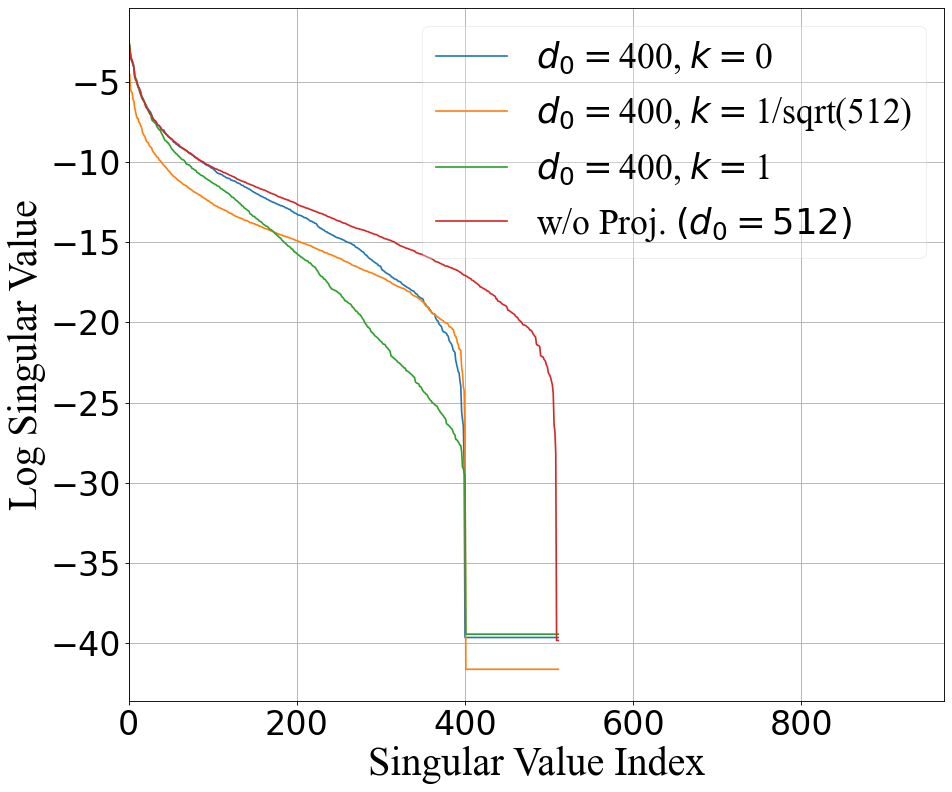}
          \caption{CIFAR100($d_0=400$)}
          \label{fig:400_c100}
      \end{subfigure}%
    \caption{Singular values plots of encoder outputs embeddings when using a constant subvector in the output embeddings for different values of $d_0$ on CIFAR10 and CIFAR100 datasets. \textcolor{black}{The above figures show the effect on the singular values of the encoder output embeddings, and it is evident that a forced collapse does not have the same effect as a natural dimensional collapse }. Best viewed at 300\%.}
      \label{fig:propgraphs}
  \end{figure*}

\paragraph{Key Takeaway 3: Forced Collapse does not enforce an Information Bottleneck} 

The empirical results provided in Tables \ref{tab:c10n100_incrank} combined with Fig. \ref{fig:propgraphs} indicate that we cannot observe the same information bottleneck effect without a projector. From the discussion in this subsection, we can safely say that the role of the projector is not only that of an information bottleneck, which is observed from the ineffectiveness of the forced collapse on the encoder output. The driving factor behind the effectiveness of the projector is that it allows the encoder to learn more high-level representations and, consequently, better separability with a higher rank of the embedding covariance matrix. Whereas a collapse in the last layer of the encoder prevents it from learning useful representations, which are essential for the effective classification of the input samples, as the norm of several kernels will be reduced to zero. With a constant subvector in the output, the weights are not updated to learn useful representations. Furthermore, being unable to learn essential representations also diminishes the mutual information between the input and the output and the generalization error bound, which we prove in the next subsection.

\subsection{Proposed Method: Remedy based on the Takeaways}



{To address weight-induced dimensional collapse without distorting the learned feature geometry of shallower layers, we introduce an orthogonal weight regularization restricted exclusively to the final layer ($W_l$). Guided by Proposition 1 and Proposition 2, applying this regularization strictly to the projection head eliminates weight magnitude shrinkage (Cause 1) by enforcing row-orthogonality ($W_l W_l^\top = I$) and preserving the full-rank capacity ($\mathrm{rank}(W_l) = D_o$) of $W_l$.}


{Empirical observations in contrastive SSL reveal that the penultimate feature covariance $\Sigma_{l-1} = \mathrm{Cov}(x_{l-1})$ maintains an intrinsically ill-conditioned eigenspectrum ($\kappa(\Sigma_{l-1}) \gg 1$), reflecting the natural variance hierarchy of learned semantic representations. In this ill-conditioned regime (Cause 4), the output representation is hyper-sensitive to weight decay and rank degradation in the projection head $W_l$. Applying orthogonal regularization ($W_l W_l^\top = I$) exclusively to the last layer eliminates weight-induced collapse (Cause 1) and guarantees that $W_l$ maintains full rank ($\mathrm{rank}(W_l) = D_o$), without distorting the functional feature geometry or semantic hierarchy of the backbone features $x_{l-1}$. Additionally, preventing the weight norm from collapsing preserves the representational capacity of the layer, thereby preventing artificial information loss between the penultimate and output representations.}

{\color{black}

{
\paragraph{Proposition 3: Weight regularization and mutual information:} Let $Z = W X \in \mathbb{R}^{D}$ be the output embedding used in the InfoNCE
objective, where $x$ is a random vector with $\Sigma_X=\mathrm{Cov}(X)\succ 0$, $Z$ is also a random vector such that $Z = [z_1, z_2, \hdots, z_{D_o}]$
and $W\in\mathbb{R}^{D_o\times D_i}$ is the weight matrix of the final layer.
Assume that $Z_{k}$ and $Z_{l}$ form a positive pair generated from two augmentations of the same input $X_k,X_l\in\mathbb{R}^{D_i}$, and let
\begin{equation}
Z_k=WX_k,\qquad Z_l=WX_l,
\end{equation}

Assume that $X_k$ and $X_l$ have finite second-order moments. We compare two cases: (i) an orthogonality-constrained representation satisfying $WW^\top=I$, and (ii) an unconstrained ({denoted by ``unc.'' from here onwards}) representation for which weight regularization may reduce the singular values of $W$ and consequently reduce its effective rank. Then the orthogonality constraint preserves the mutual information between the two augmented views, whereas an unconstrained $W$ can reduce the mutual information when it suppresses directions carrying information shared between the two views, that is,

\begin{minipage}{\linewidth}
\begin{equation}
\mathcal{I}(Z_{k};Z_{l} \mid WW^\top=I) \;\ge\; \mathcal{I}(Z_{k};Z_{l} \mid W_{\text{unc.}})
\end{equation}
\end{minipage}

\noindent \textbf{Proof:} Consider the final representation layer $Z = W X \in \mathbb{R}^{D_o}$, where $W \in \mathbb{R}^{D_o \times D_i}$ ($D_o \le D_i$) and $\Sigma_X = \operatorname{Cov}(X) \succ 0$. The output covariance is given by:
\begin{equation}
\Sigma_Z = \operatorname{Cov}(Z) = W \Sigma_X W^\top \in \mathbb{R}^{D_o \times D_o}
\end{equation}

Let $\mathbf{u}_i \in \mathbb{R}^{D_o}$ be an orthonormal eigenvector of $\Sigma_Z$ corresponding to eigenvalue $\lambda_i(\Sigma_Z)$, with $\|\mathbf{u}_i\|_2 = 1$. By the Rayleigh-Ritz theorem \citep{Trefethen2022NLA}:
\begin{equation}
\lambda_i(\Sigma_Z) = \mathbf{u}_i^\top \Sigma_Z \mathbf{u}_i = \mathbf{u}_i^\top \left( W \Sigma_X W^\top \right) \mathbf{u}_i = (W^\top \mathbf{u}_i)^\top \Sigma_X (W^\top \mathbf{u}_i)
\end{equation}

Since $\Sigma_X \succ 0$ is symmetric positive-definite, bounding the quadratic form by the extremal eigenvalues of $\Sigma_X$ yields:
\begin{equation}
\lambda_{\min}(\Sigma_X) \|W^\top \mathbf{u}_i\|_2^2 \;\le\; \lambda_i(\Sigma_Z) \;\le\; \lambda_{\max}(\Sigma_X) \|W^\top \mathbf{u}_i\|_2^2
\label{eqn:rayleigh_bounds}
\end{equation}
where $\|W^\top \mathbf{u}_i\|_2^2 = \mathbf{u}_i^\top W W^\top \mathbf{u}_i$. Equation \ref{eqn:rayleigh_bounds} establishes the exact connection between $WW^\top$ and output spectrum collapse:

\begin{enumerate}
    \item \textbf{Unconstrained Case ($WW^\top \neq I_{D_o}$):} Without orthogonal constraints, weight decay or rank deficiency permits $\|W^\top \mathbf{u}_i\|_2^2 \to 0$ along trailing modes $i \in \{r+1, \dots, D_o\}$. Using the upper bound in Eqn. \ref{eqn:rayleigh_bounds}, this directly forces output collapse:
    \begin{equation}
    \lambda_i(\Sigma_Z) \le \lambda_{\max}(\Sigma_X) \|W^\top \mathbf{u}_i\|_2^2 \to 0,
    \end{equation}
    The above equation indicates that the unconstrained case concentrates representations $Z$ onto an $r$-dimensional affine subspace $\mathcal{A}_Z \subset \mathbb{R}^{D_o}$ by the Eckart-Young-Mirsky theorem \citep{Humpherys2017AppliedMath}.
    
    \item \textbf{Orthogonally Constrained Case ($WW^\top = I_{D_o}$):} Enforcing $WW^\top = I_{D_o}$ guarantees that for any unit vector $\mathbf{u}_i$, $\|W^\top \mathbf{u}_i\|_2^2 = \mathbf{u}_i^\top I_{D_o} \mathbf{u}_i = 1$. Utilizing the lower bound in Eqn. \ref{eqn:rayleigh_bounds}, the output spectrum is strictly bounded away from zero:
    \begin{equation}
    \lambda_i(\Sigma_Z) \ge \lambda_{\min}(\Sigma_X) > 0 \quad \forall i \in \{1, \dots, D_o\},
    \end{equation}
    The above equation guarantees $\operatorname{rank}(\Sigma_Z) = D_o$ and preventing weight-induced dimensional collapse.
\end{enumerate}

\noindent \textbf{Preservation of Mutual Information via Orthogonal Regularization.} Next, we establish that enforcing $W$ to be orthogonal results in preservation of mutual information, compared to when the weights are unconstrained. We arrive at the above statement by progressing step-by-step through the proof as follows:

\paragraph{Step 1: Constructing the Matrix Factorization.}
Consider the output representations under the orthogonal reference matrix $W_{\mathrm{ortho}}$:
\begin{equation}
Z_k^{\mathrm{ortho}} = W_{\mathrm{ortho}} X_k, \qquad Z_l^{\mathrm{ortho}} = W_{\mathrm{ortho}} X_l \quad \in \mathbb{R}^{D_o}
\end{equation}
Also assume, there exists an orthogonally constrained final-layer weight matrix $W_{\mathrm{ortho}} \in \mathbb{R}^{D_o \times D_i}$ ($D_o \le D_i$) satisfying strict row-orthogonality: $W_{\mathrm{ortho}} W_{\mathrm{ortho}}^\top = I_{D_o}$. This guarantees that $W_{\mathrm{ortho}}$ is full row rank ($\operatorname{rank}(W_{\mathrm{ortho}}) = D_o$) and the matrix product $P_{\mathrm{ortho}} = W_{\mathrm{ortho}}^\top W_{\mathrm{ortho}} \in \mathbb{R}^{D_i \times D_i}$ forms an orthogonal projection operator onto $\operatorname{row}(W_{\mathrm{ortho}})$. 

Furthermore assume that, for any unconstrained weight matrix $W_{\text{unc.}} \in \mathbb{R}^{D_o \times D_i}$, its row space is contained within the $D_o$-dimensional row space of $W_{\mathrm{ortho}}$: $\operatorname{row}(W_{\text{unc.}}) \subseteq \operatorname{row}(W_{\mathrm{ortho}})$. 

Consequently, the matrix $P_{\mathrm{ortho}} = W_{\mathrm{ortho}}^\top W_{\mathrm{ortho}} \in \mathbb{R}^{D_i \times D_i}$ acts as an identity operator on the row space of $W_{\text{unc.}}$, yielding $W_{\text{unc.}} P_{\mathrm{ortho}} = W_{\text{unc.}}$, which implies $W_{\text{unc.}} P_{\mathrm{ortho}} = W_{\text{unc.}}$.  Next, we can define a transition matrix $M \in \mathbb{R}^{D_o \times D_o}$ as:
\begin{equation}
M = W_{\text{unc.}} W_{\mathrm{ortho}}^\top
\end{equation}

Next, if we multiply $M$ by $W_{\mathrm{ortho}}$ we get the following:
\begin{equation}
\begin{split}
    M W_{\mathrm{ortho}} &= \left(W_{\text{unc.}} W_{\mathrm{ortho}}^\top\right) W_{\mathrm{ortho}} \\&= W_{\text{unc.}} \left(W_{\mathrm{ortho}}^\top W_{\mathrm{ortho}}\right) = W_{\text{unc.}} P_{\mathrm{ortho}} = W_{\text{unc.}}
\end{split}
\end{equation}
Thus, $W_{\text{unc.}}$ cleanly decomposes into a linear transformation $M$ acting directly on $W_{\mathrm{ortho}}$:
\begin{equation}
W_{\text{unc.}} = M W_{\mathrm{ortho}}
\end{equation}

In the next step, we map the representations obtained using the unconstrained and orthogonal weights and formulate using a Markov chain to find the relationship between the mutual information of the two.

\paragraph{Step 2: Representation Mapping and Markov Chain Formulation.}
Using the factorization $W_{\text{unc.}} = M W_{\mathrm{ortho}}$, the unconstrained output representations $Z_k^{\text{unc.}} = W_{\text{unc.}} X_k$ and $Z_l^{\text{unc.}} = W_{\text{unc.}} X_l$ can be written as deterministic functions of the orthogonal representations:
\begin{equation}
Z_k^{\text{unc.}} = M \left(W_{\mathrm{ortho}} X_k\right) = M Z_k^{\mathrm{ortho}}, \qquad Z_l^{\text{unc.}} = M \left(W_{\mathrm{ortho}} X_l\right) = M Z_l^{\mathrm{ortho}}
\end{equation}

Since $Z_k^{\text{unc.}}$ depends on $X_k$ solely through $Z_k^{\mathrm{ortho}}$, and $Z_l^{\text{unc.}}$ depends on $X_l$ solely through $Z_l^{\mathrm{ortho}}$, the four random vectors satisfy a symmetric $4$-node Markov chain:
\begin{equation}
Z_k^{\text{unc.}} \longrightarrow Z_k^{\mathrm{ortho}} \longrightarrow Z_l^{\mathrm{ortho}} \longrightarrow Z_l^{\text{unc.}}
\end{equation}

Next, we apply the data processing inequality \citep{Cover2006EleInfoTheory} on the Markov chain to deduce that the mutual information cannot increase.

\paragraph{Step 3: Application of the Data Processing Inequality (DPI).}
By the Data Processing Inequality \citep{Cover2006EleInfoTheory} for Markov chains, passing a random vector through a deterministic mapping cannot increase its mutual information with any other random variable. 

First, applying DPI to the transition $Z_k^{\text{unc.}} \to Z_k^{\mathrm{ortho}} \to Z_l^{\mathrm{ortho}}$:
\begin{equation}
\mathcal{I}\left(Z_k^{\text{unc.}}; Z_l^{\mathrm{ortho}}\right) \;\le\; \mathcal{I}\left(Z_k^{\mathrm{ortho}}; Z_l^{\mathrm{ortho}}\right)
\end{equation}
Second, applying DPI to the right-hand transition $Z_k^{\text{unc.}} \to Z_l^{\mathrm{ortho}} \to Z_l^{\text{unc.}}$:
\begin{equation}
\mathcal{I}\left(Z_k^{\text{unc.}}; Z_l^{\text{unc.}}\right) \;\le\; \mathcal{I}\left(Z_k^{\text{unc.}}; Z_l^{\mathrm{ortho}}\right)
\end{equation}
Combining these two inequalities yields the general weak inequality:
\begin{equation}
\mathcal{I}\left(Z_k^{\text{unc.}}; Z_l^{\text{unc.}}\right) \;\le\; \mathcal{I}\left(Z_k^{\mathrm{ortho}}; Z_l^{\mathrm{ortho}}\right)
\label{eqn:dpi_weak_inequality}
\end{equation}

\paragraph{Step 4: Establishing Strict Inequality via Null-space Information Loss.}
From Proposition 1 and 2, we can state that, under unconstrained training or weight decay, $W_{\text{unc.}}$ undergoes dimensional collapse to an effective rank $r < D_o$. Because $\operatorname{rank}(M) \le \operatorname{rank}(W_{\text{unc.}}) = r$, the operator $M \in \mathbb{R}^{D_o \times D_o}$ is non-invertible and possesses a non-trivial null-space $\operatorname{Null}(M) \subset \mathbb{R}^{D_o}$ of dimension $D_o - r > 0$.

Next, we decompose the orthogonal representation $Z_k^{\mathrm{ortho}}$ into two orthogonal components: $Z_k^{\parallel} \in \operatorname{Range}(M^\top)$ (the preserved subspace) and $Z_k^{\perp} \in \operatorname{Null}(M)$ (the collapsed subspace). We similarly decompose $Z_l^{\mathrm{ortho}}$ as $Z_l^{\mathrm{ortho}} = Z_l^{\parallel} + Z_l^{\perp}$.

The deterministic transformation $M$ acts as a bijection on $\operatorname{Range}(M^\top)$ but maps all variance in $\operatorname{Null}(M)$ to zero (Detailed description in Appendix \ref{sec:appM}):
\begin{equation}
Z_k^{\text{unc.}} = M Z_k^{\parallel} + 0 = M Z_k^{\parallel}, \qquad Z_l^{\text{unc.}} = M Z_l^{\parallel}
\end{equation}
By the chain rule for mutual information:
\begin{equation}
\mathcal{I}\left(Z_k^{\mathrm{ortho}}; Z_l^{\mathrm{ortho}}\right) = \mathcal{I}\left(Z_k^{\parallel}; Z_l^{\parallel}\right) + \mathcal{I}\left(Z_k^{\perp}; Z_l^{\perp} \;\mid\; Z_k^{\parallel}, Z_l^{\parallel}\right)
\end{equation}
Since $M$ is a bijection on $\operatorname{Range}(M^\top)$, $\mathcal{I}\left(Z_k^{\text{unc.}}; Z_l^{\text{unc.}}\right) = \mathcal{I}\left(Z_k^{\parallel}; Z_l^{\parallel}\right)$. Therefore, the exact information lost due to rank collapse is:
\begin{equation}
\mathcal{I}\left(Z_k^{\mathrm{ortho}}; Z_l^{\mathrm{ortho}}\right) - \mathcal{I}\left(Z_k^{\text{unc.}}; Z_l^{\text{unc.}}\right) = \mathcal{I}\left(Z_k^{\perp}; Z_l^{\perp} \;\mid\; Z_k^{\parallel}, Z_l^{\parallel}\right)
\end{equation}

If we assume that, the feature subspace eliminated by the rank collapse of $W_{\text{unc.}}$ carries a non-zero amount of shared statistical information between the two augmented views $X_k$ and $X_l$, making $\mathcal{I}\left(Z_k^{\perp}; Z_l^{\perp} \mid Z_k^{\parallel}, Z_l^{\parallel}\right) > 0$, this establishes the strict inequality:
\begin{equation}
\mathcal{I}\left(Z_k; Z_l \mid WW^\top = I_{D_o}\right) \;>\; \mathcal{I}\left(Z_k; Z_l \mid W_{\text{unc.}}\right)
\end{equation}
which proves that orthogonal regularization strictly preserves mutual information against weight-induced dimensional collapse.

}

\paragraph{Preventing small variance using weight regularization:} Considering the above deductions, we intend to apply weight regularization to the last-layer weight matrix, thereby maintaining full-rank covariance and preserving information across all dimensions, thereby maximising mutual information between embeddings. 

Regularizing the weight matrix via minimizing $\|WW^\top - I\|_F^2$, as follows,
\begin{equation}
    W = \arg \min_{W} \|WW^\top - I\|_F^2
\end{equation}

This objective encourages the singular values of \(W\) to remain bounded away from zero, thereby discouraging degenerate projection directions in the representation space. Consequently, the representation covariance is less likely to collapse onto a lower-dimensional subspace.

\subsection{Optimization Objective}
To prove that our interpretation of the role of the projector in self-supervised contrastive learning is correct, we regularize the weight matrix $W$ of the last layer only by minimizing the regularization loss $\mathcal{L}_{reg} = \lVert WW^T - I\rVert^2$, in addition to the conservative loss in Eqn. \ref{eqn:infonce}. Thus, the final loss is described as,

\begin{minipage}{\linewidth}
\begin{equation}
    \mathcal{L}_{total} = \mathcal{L}_{infonce} + \alpha \mathcal{L}_{reg}
    \label{eqn:finalloss}
\end{equation}
\end{minipage}
}

\textcolor{black}{Based on the above objective function, we optimize the parameters of the neural network following the implementation details outlined in Sec. \ref{sec:impldet}. The results obtained are presented in the section below. We present the results on the datasets CIFAR10, CIFAR100, and ImageNet100 and also present eigenvalue plots for the CIFAR datasets to compare the effect of the proposed regularization towards the prevention of dimensional collapse. For all our experiments, we use $\alpha=0.1$ for optimal performance following WeRank.}

\section{\textcolor{black}{Results and Analyses}}
\label{sec:resanal}
\subsection{Comparison with state-of-the-art Contrastive learning frameworks}

\begin{table}[!ht]
    \centering    
    {\color{black}
    \caption{\textcolor{black}{Comparison of results obtained by applying WeRank variations to SimCLR without Projector, and our proposed strategy on ImageNet100 datasets. Here, `LL' refers to `Last Layer'. (\textcolor{PineGreen}{+}/\textcolor{black}{-}\;$\cdot$): change from the previous model variation. We can observe that the proposed methods outperform the baseline SimCLR (vanilla) and WeRank on almost all cases (except one). This proves the effectiveness of the proposed regularization strategy on preventing degradation of performance due to dimensional collapse}.}
    \label{tab:compinwerank}
    \begin{tabular}{m{6.8cm}|c|c}
    \toprule
    Method & ImageNet100\\\midrule
    SimCLR (vanilla) w/o Projector & \ul{45.82} \\ \midrule 
    SimCLR w/o Proj. + WeRank (Full Enc.) & 44.68 (\textcolor{black}{-1.14}) \\\midrule
    SimCLR w/o Proj. + Wt. Reg. LL (Ours) & \textbf{46.82} (\textcolor{PineGreen}{+1.0}) \\\midrule \midrule
    \end{tabular}
    }
\end{table}

\begin{table}[!ht]
\begin{minipage}{\linewidth}
    \centering    
    \caption{Comparison of results obtained by applying WeRank variations to SimCLR with and without Projector, DirectCLR and our proposed strategy on CIFAR10 and CIFAR100 datasets. Here, `LL' refers to `Last Layer'. `CS' refers to `Constant Subvector'. (\textcolor{PineGreen}{+}/\textcolor{black}{-}\;$\cdot$): change from the \textbf{baseline} model. \textcolor{black}{We can observe that the proposed methods outperform the baseline SimCLR (vanilla) and WeRank on almost all cases (except one). This proves the effectiveness of the proposed regularization strategy on preventing degradation of performance due to dimensional collapse}.}
    \begin{tabular}{m{7.8cm}|c|c}
    \toprule
    Method & CIFAR10 & CIFAR100\\\midrule
    SimCLR (vanilla) & 86.1 & 56.3\\\midrule
    SimCLR + WeRank \citep{pasand2024werank} & \textbf{86.5} (\textcolor{PineGreen}{+0.4}) & \ul{56.8} (\textcolor{PineGreen}{+0.5}) \\\midrule
    SimCLR + Wt. Reg. LL (Ours) & \ul{86.3} (\textcolor{PineGreen}{+0.2}) & \textbf{57.5} (\textcolor{PineGreen}{+1.2}) \\\midrule \midrule
    SimCLR (vanilla) w/o Projector & 84.5 & \ul{52.8} \\ \midrule 
    SimCLR w/o Proj. + WeRank (Full Enc.) & \ul{84.9} (\textcolor{PineGreen}{+0.4}) & 52.6 (\textcolor{black}{-0.2})\\\midrule
    SimCLR w/o Proj. + Wt. Reg. LL (Ours) & \textbf{85.1} (\textcolor{PineGreen}{+0.6}) & \textbf{53.1} (\textcolor{PineGreen}{+0.3})\\\midrule \midrule
    DirectCLR (vanilla) & \textit{85.2} & \textbf{53.2} \\ \midrule
    DirectCLR+ WeRank (Full Enc.) & \ul{85.4} (\textcolor{PineGreen}{+0.2}) & \ul{53.0} (\textcolor{black}{-0.2}) \\\midrule
    DirectCLR+ Wt. Reg. LL (Ours) & \textbf{85.5} (\textcolor{PineGreen}{+0.3}) & \textbf{53.2} (\textcolor{PineGreen}{+0.0})
    \\\midrule \midrule
    SimCLR w/o Proj (w/ CS) & 84.7 & 52.5\\ \midrule
    SimCLR w/o Proj (w/ CS) + WeRank (Full Enc.) & \ul{85.0} (\textcolor{PineGreen}{+0.3}) & \ul{53.0} (\textcolor{PineGreen}{+0.5}) \\\midrule
    SimCLR w/o Proj (w/ CS) + Wt. Reg. LL (Ours) & \textbf{85.5} (\textcolor{PineGreen}{+0.8}) & \textbf{53.1} (\textcolor{PineGreen}{+0.6}) \\\midrule
    \midrule
    \end{tabular}
    \label{tab:compwerank}
\end{minipage}%
\end{table}

In this subsection, we analyse the efficacy of the proposed solution on different self-supervised frameworks, both with and without a projector. From Table \ref{tab:compinwerank} and \ref{tab:compwerank}, we can observe that the proposed solution successfully improves the kNN accuracy of all the SSL frameworks used and also reduces the dimensional collapse issue due to the low variance of feature dimensions in Fig. \ref{fig:comp_no_proj_wt_reg_ll} (in \Cref{sec:plot_no_proj_wt_reg}). \textcolor{black}{We also conduct experiments on the ImageNet100 dataset, and compare our proposed strategy with vanilla SimCLR, and SimCLR added with WeRank, all trained without a projector. We observe from Table \ref{tab:compinwerank} that the proposed strategy outperforms the two baselines on ImageNet100.}

\textcolor{black}{Enforcing $W.W^T=I$ at the last layer improves SimCLR by conditioning the embedding geometry seen by the contrastive objective and preventing projector-induced anisotropy. However, imposing orthogonality constraints throughout the backbone (as in WeRank) over-regularizes the representation function class to a subspace in the function space with orthogonal weight matrices, restricting anisotropic feature shaping that is crucial for semantic abstraction \citep{Wang_2025_nerf, GaoC0Y25anisdf, RudmanE24istar}, and thereby degrades downstream performance. This suggests that spectral conditioning is most effective when applied locally to the contrastive embedding space rather than globally to all feature transformations.}

\subsection{Eigenvalue plots comparing SimCLR Encoder and our method}
\label{sec:plot_no_proj_wt_reg}

\begin{figure*}[h!]
    \hspace*{\fill}%
      \centering
      \begin{subfigure}[b]{0.33\linewidth}
          \centering
        \includegraphics[width = \linewidth]{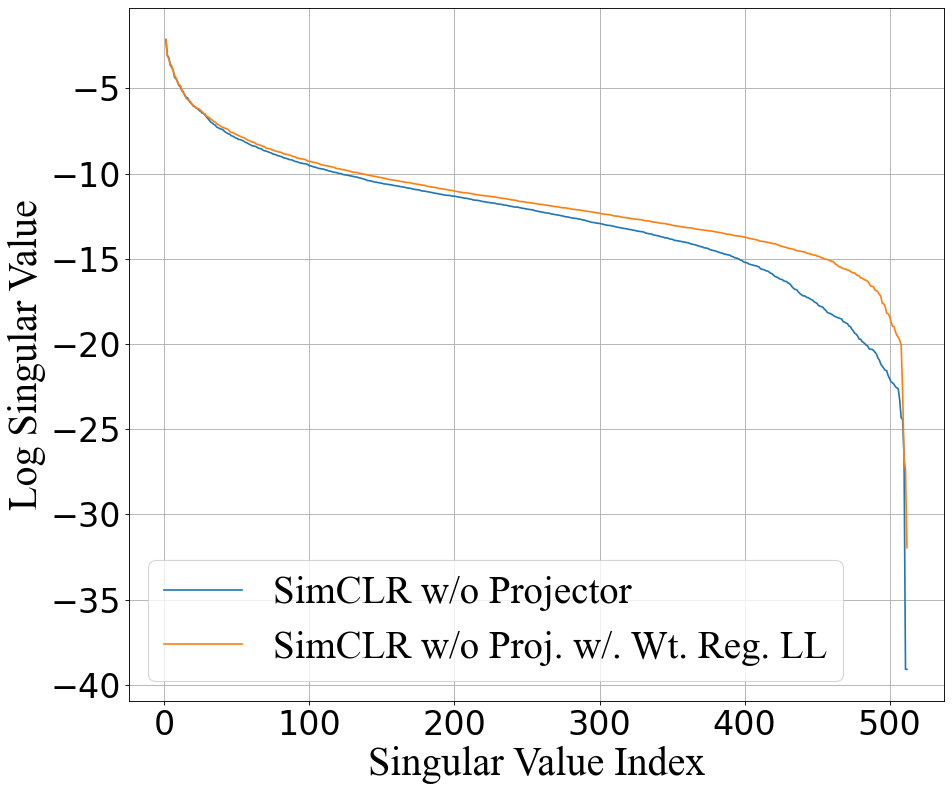}
          \caption{CIFAR10}
          \label{fig:comp_c10}
      \end{subfigure}
      \hfill
      \begin{subfigure}[b]{0.33\linewidth}
          \centering
        \includegraphics[width = \linewidth]{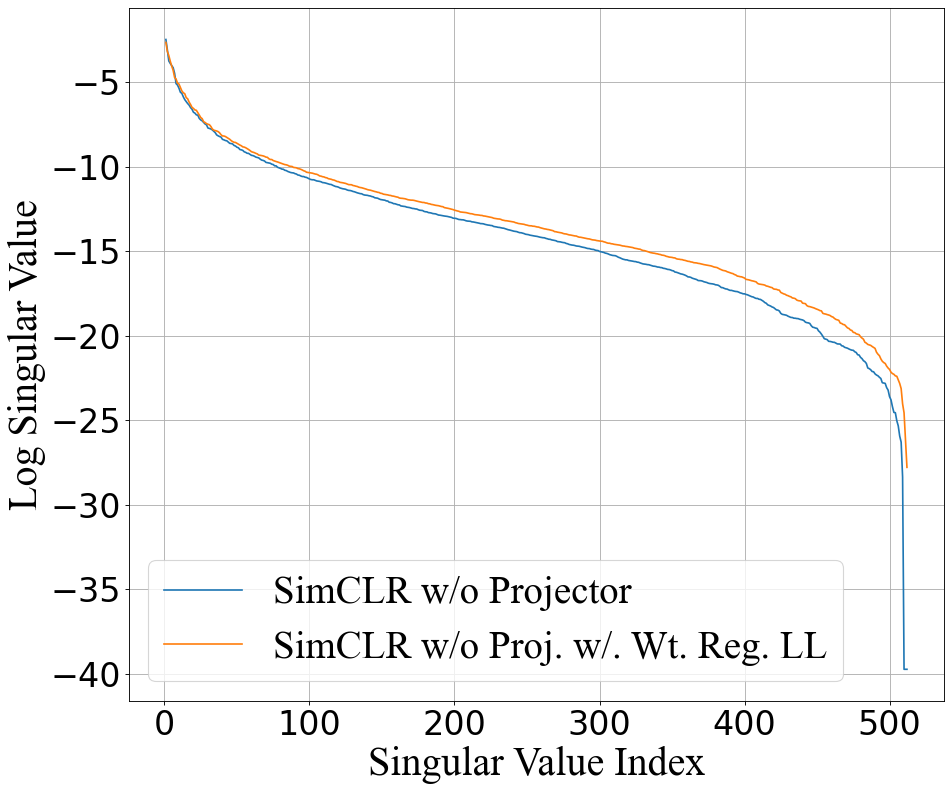}
          \caption{CIFAR100}
          \label{fig:comp_c100}
      \end{subfigure}%
      \hspace*{\fill}%
    \caption{Singular values plots of encoder outputs, embeddings of SimCLR without a projector, and our method (last layer weight regularization) on CIFAR10 and CIFAR100 datasets. \textcolor{black}{The plots clearly exhibit an improvement in the singular values, which indicates that the proposed regularization strategy prevents the dimensional collapse in the encoder output embeddings, when used without an additional projector}.}
      \label{fig:comp_no_proj_wt_reg_ll}
  \end{figure*}

From Fig.\ref{fig:comp_no_proj_wt_reg_ll} we can see that when weight regularization is performed on the last layer of the encoder network, the singular values of the output embeddings improve across all dimensions. Thus, our method can reduce the effect of dimensional collapse due to the low variance of feature dimensions and provide better performance.

\section{Implementation Details}
\label{sec:impldet}
\paragraph{Datasets:}

We primarily used three datasets for our study: CIFAR10, CIFAR100 \citep{krizhevsky2009cifar}, and ImageNet100 \citep{yonglong2020cmc}. 
CIFAR10 and CIFAR100 datasets consist of 10 and 100 classes, respectively, with 50K samples in the training set. ImageNet100 contains 1300 images in each of the 100 classes.

\textbf{Pre-training Details:} For experiments on CIFAR \citep{krizhevsky2009cifar} and ImageNet, we used ResNet18 and ResNet50 as a backbone with the same modifications as done in SimCLR \citep{chen2020simclr} for small-scale datasets (CIFAR). 
We used a batch size of 128 and 256 for CIFAR and ImageNet, respectively.  For the CIFAR and ImageNet datasets, we used SGD and LARS optimizer, respectively. 
All the implementations were done using \textit{lightly-ai} \citep{susmelj2020lightly} library. The value of the temperature hyper-parameter was set to $0.2$ for all experiments. \textcolor{black}{The value of $\alpha$ was set to 0.1 for all experiments, as using a higher value degrades performance. For the CIFAR dataset, the initial learning rate was set to 0.06 and decayed following a cosine schedule, whereas for the ImageNet dataset, an initial learning rate of 0.3 was used and decayed following the same schedule as the CIFAR dataset. For all the datasets, the training was conducted for 200 epochs only.}

For the computation of the SVD decomposition, we simply used the \textit{svd} function from the \textit{numpy} library, following DirectCLR \citep{jing2022directclr}.

{
\section{Limitations and Future Work}

In this work, we presented a layer-wise linear algebraic framework to characterize the multiple possible mechanisms of dimensional collapse, specifically isolating weight norm shrinkage, inter-layer spectral misalignment, and feature covariance misalignment. While our proposed targeted weight regularization ($WW^\top = I$) effectively mitigates weight-induced rank collapse in the last layer, it primarily addresses weight norm decay without explicitly constraining inter-layer or feature-space directional alignments. In future work, we plan to develop a unified optimization framework that jointly addresses spectral and feature-space misalignment, offering a more comprehensive solution to dimensional collapse. Additionally, while our theoretical setup and empirical validations focus specifically on InfoNCE-based contrastive learning architectures with CNN backbones, extending this last-layer rank analysis to non-contrastive self-supervised paradigms (e.g., feature decorrelation \citep{bardes2022vicreg} or predictive architectures \citep{assran2023ijepa, randall2025lejepa}) and Transformer-based backbones (\citep{DosovitskiyB0WZ21VIT} presents an exciting direction for future research.
}

\section{Conclusion}
\label{sec:conclusion}
In this work, we investigate the main reason behind the effectiveness of the projector in preventing dimensional collapse. We analyze mathematically the phenomenon that happens inside the projector and the encoder when trained without a projector. We find that the projector not only creates an information bottleneck but also facilitates the learning of high-level representations in the encoder; without it, a dimensional collapse at the encoder output prevents such learning. We also devise a solution to improve performance by applying weight regularization only to the last layer, with or without a projector, achieving performance better than WeRank, which uses weight regularization across the whole network. We leave the study of the cause of dimensional collapse for our future work.

\subsubsection*{Broader Impact Statement}
This paper presents work whose goal is to advance the field of Machine Learning. There are many potential societal consequences of our work, none of which we feel must be specifically highlighted here.

\bibliographystyle{tmlr}
\bibliography{main}

\newpage
\appendix

{\color{black}

\section{Notations}
\label{subsec:notations}

In this subsection, we discuss the notations followed in the rest of the paper for mathematical derivations and discussions.


\begin{table}[!ht]
    \centering
    {\color{black}
    \caption{Table for Notations}
    \begin{tabular}{|c|c|}
    ine
       Symbols  & What it means \\ ine
        $\mathcal{L}$ & Loss function\\
        $x$ & Input \\
        $f$ & Encoder \\
        $h$ & output of Encoder $f$\\
        $g$ & Projector \\
        $z$ & Output of Projector \\
        $s_{ij}$ & Cosine similarity between the projector output embeddings of the samples of $x_i$ and $x_j$\\
        $B$ & Batch Size \\
        $D$ & Number of dimensions in the encoder output embedding\\
        $d_0$ & Number of trainable dimensions in the encoder output embedding\\
        $d_r$ & Number of non-trainable dimensions in the encoder output embedding\\
        $W_l$ & Weight matrix of $l$-th layer with dimensions $D_o \times D_i$\\
        $D_o$ & Output dimensions \\
        $D_i$ & Input dimensions \\
        $W_l^i$ & $i$-th row of the weight matrix $W_l$\\
        $\mathcal{I}$ & Mutual information \\
    ine
    \end{tabular}
    }
    \label{tab:notationstable}
\end{table}

}

\section{Bounding the cross-covariance term.} 
\label{coundcrosscov}

For any two rows $W^i, W^j \in \mathbb{R}^{1\times D}$ and a symmetric positive semi-definite matrix $\Sigma \in \mathbb{R}^{D\times D}$, the following holds:
\begin{equation}
    |W^j \Sigma (W^i)^T| \le \lambda_{\max}(\Sigma)\,\|W^j\|_2\,\|W^i\|_2.
    \label{eq:cov_bound}
\end{equation}
\textit{Proof.} Let $w_i = (W^i)^T$ and $w_j = (W^j)^T$. 
Then, the scalar quantity can be rewritten as
\begin{equation}
    |W^j \Sigma (W^i)^T| = |w_j^T \Sigma w_i|.
\end{equation}
Since $\Sigma$ is symmetric and positive semi-definite, it admits a spectral factorization $\Sigma = \Sigma^{1/2}\Sigma^{1/2}$.
Hence,
\begin{equation}
    |w_j^T \Sigma w_i|
    = |w_j^T \Sigma^{1/2}\Sigma^{1/2} w_i|
    = |\langle \Sigma^{1/2} w_j,\, \Sigma^{1/2} w_i \rangle|.
\end{equation}
Applying the Cauchy–Schwarz inequality yields
\begin{equation}
    |\langle \Sigma^{1/2} w_j,\, \Sigma^{1/2} w_i \rangle|
    \le \|\Sigma^{1/2} w_j\|_2\,\|\Sigma^{1/2} w_i\|_2.
\end{equation}
Using the sub-multiplicative property of the operator (spectral) norm,
\begin{equation}
    \|\Sigma^{1/2} w_i\|_2 \le \|\Sigma^{1/2}\|_{op}\,\|w_i\|_2,
\end{equation}
and similarly for $w_j$. Therefore,
\begin{equation}
    |w_j^T \Sigma w_i|
    \le \|\Sigma^{1/2}\|_{op}^2\,\|w_j\|_2\,\|w_i\|_2.
\end{equation}
Since $\|\Sigma^{1/2}\|_{op}^2 = \|\Sigma\|_{op} = \lambda_{\max}(\Sigma)$ for symmetric $\Sigma$, we obtain
\begin{equation}
    |W^j \Sigma (W^i)^T|
    \le \lambda_{\max}(\Sigma)\,\|W^j\|_2\,\|W^i\|_2,
\end{equation}
which completes the proof. \hfill $\square$

\noindent
\textbf{Companion lower bounds and alignment condition.}
When \(\Sigma\) is symmetric positive definite (so \(\lambda_{\min}(\Sigma)>0\)), the variance of an output coordinate admits the immediate lower bound
\begin{equation}
    \operatorname{Var}(h_i)=W^i\Sigma (W^i)^T \ge \lambda_{\min}(\Sigma)\,\|W^i\|_2^2.
    \label{eq:var_lower_bound}
\end{equation}
Thus, if \(\operatorname{Var}(h_i)\) is small then \(\|W^i\|\) must be small (cf. Eq. \ref{eqn:boundWi}).

For the cross-covariance term one can write an exact decomposition by using the square root of \(\Sigma\):
\begin{equation}
    W^j\Sigma (W^i)^T = \big\langle \Sigma^{1/2} w_j,\,\Sigma^{1/2} w_i \big\rangle
    = \|\Sigma^{1/2} w_j\|_2\,\|\Sigma^{1/2} w_i\|_2\cos\theta,
    \label{eq:cov_cos}
\end{equation}
where \(w_i=(W^i)^T\), \(w_j=(W^j)^T\), and \(\theta\) is the angle between the vectors \(\Sigma^{1/2} w_j\) and \(\Sigma^{1/2} w_i\) in \(\mathbb{R}^D\). Using \(\|\Sigma^{1/2} w\|_2 \ge \sqrt{\lambda_{\min}(\Sigma)}\,\|w\|_2\) and \(\|\Sigma^{1/2} w\|_2 \le \sqrt{\lambda_{\max}(\Sigma)}\,\|w\|_2\) gives the two-sided inequalities
\begin{equation}
    |W^j\Sigma (W^i)^T|
    = \|\Sigma^{1/2} w_j\|_2\,\|\Sigma^{1/2} w_i\|_2\,|\cos\theta|
    \le \lambda_{\max}(\Sigma)\,\|W^j\|_2\,\|W^i\|_2,
\end{equation}
and, if desired,
\begin{equation}
    |W^j\Sigma (W^i)^T|
    \ge \lambda_{\min}(\Sigma)\,\|W^j\|_2\,\|W^i\|_2\,|\cos\theta|.
    \label{eq:cov_lower_bound}
\end{equation}

Equation \ref{eq:cov_lower_bound} shows why an \emph{alignment} or \emph{non-orthogonality} assumption is necessary to deduce that a small cross-covariance forces small weight norms: even if \(\|\Sigma^{1/2} w_i\|_2\) and \(\|\Sigma^{1/2} w_j\|_2\) are large, the inner product can vanish when \(\cos\theta=0\) (i.e., the two vectors are orthogonal in the \(\Sigma^{1/2}\)-weighted space). Therefore, to conclude that \(|W^j\Sigma (W^i)^T|\) being small implies either \(\|W^i\|\) or \(\|W^j\|\) is small, one must additionally assume a lower bound on \(|\cos\theta|\) (for instance, \(|\cos\theta|\ge c>0\)), or otherwise restrict the class of admissible weight pairs so that they are not \(\Sigma^{1/2}\)-orthogonal.

\hfill \(\square\)


\section{Linear Operator Representation of Convolution}
\label{app:conv_fc}

This appendix shows how a convolutional layer can be expressed as a structured
linear operator by progressively moving from the 1D case to the 2D and
multi-channel settings. This derivation justifies the use of linear-algebraic
arguments in the main text.

\subsection{1D Convolution and Toeplitz Structure}

Consider a 1D convolution without non-linearity.
Let the input be \(x \in \mathbb{R}^{n}\) and the kernel be
\(k = [k_0, \ldots, k_{m-1}]^\top \in \mathbb{R}^{m}\).
Assuming stride 1 and no padding, the output \(y \in \mathbb{R}^{n-m+1}\) is
\begin{equation}
y_i = \sum_{j=0}^{m-1} k_j \, x_{i+j}, \quad i = 0,\ldots,n-m.
\label{eq:conv1d}
\end{equation}

Define the matrix \(W_{\mathrm{1D}} \in \mathbb{R}^{(n-m+1) \times n}\) as
\[
W_{\mathrm{1D}} =
\begin{bmatrix}
k_0 & k_1 & \cdots & k_{m-1} & 0 & \cdots & 0 \\
0 & k_0 & k_1 & \cdots & k_{m-2} & k_{m-1} & \cdots \\
\vdots & & \ddots & & & & \vdots \\
0 & \cdots & 0 & k_0 & k_1 & \cdots & k_{m-1}
\end{bmatrix}.
\]
Then \eqref{eq:conv1d} can be written compactly as
\begin{equation}
y = W_{\mathrm{1D}} \, x.
\label{eq:conv1d_mat}
\end{equation}
The matrix \(W_{\mathrm{1D}}\) is a Toeplitz matrix, i.e., its entries are constant
along each diagonal.

\subsection{2D Convolution and Block Toeplitz Structure}

Now consider a 2D convolution with a single input and output channel.
Let \(X \in \mathbb{R}^{H \times W}\) be the input feature map and
\(K \in \mathbb{R}^{k_h \times k_w}\) be the convolution kernel.
The output \(Y \in \mathbb{R}^{H' \times W'}\) is given by
\begin{equation}
Y_{i,j}
=
\sum_{u=0}^{k_h-1}
\sum_{v=0}^{k_w-1}
K_{u,v} \, X_{i+u,\,j+v}.
\label{eq:conv2d}
\end{equation}

Let \(x = \mathrm{vec}(X)\) and \(y = \mathrm{vec}(Y)\).
Then there exists a matrix \(W_{\mathrm{2D}} \in \mathbb{R}^{(H'W') \times (HW)}\)
such that
\begin{equation}
y = W_{\mathrm{2D}} \, x.
\label{eq:conv2d_mat}
\end{equation}

The matrix \(W_{\mathrm{2D}}\) has a block Toeplitz with Toeplitz blocks (BTTB)
structure:
\[
W_{\mathrm{2D}} =
\begin{bmatrix}
T_0 & T_1 & \cdots & T_{k_h-1} & 0 & \cdots & 0 \\
0 & T_0 & T_1 & \cdots & T_{k_h-2} & T_{k_h-1} & \cdots \\
\vdots & & \ddots & & & & \vdots \\
0 & \cdots & 0 & T_0 & T_1 & \cdots & T_{k_h-1}
\end{bmatrix},
\]
where each block \(T_u \in \mathbb{R}^{W' \times W}\) is a Toeplitz matrix formed
from the \(u\)-th row of the kernel:
\[
T_u =
\begin{bmatrix}
K_{u,0} & K_{u,1} & \cdots & K_{u,k_w-1} & 0 & \cdots \\
0 & K_{u,0} & K_{u,1} & \cdots & K_{u,k_w-1} & \cdots \\
\vdots & & \ddots & & & \vdots
\end{bmatrix}.
\]
Thus, 2D convolution corresponds to a BTTB linear operator.

\subsection{Multi-Channel Convolution}

Finally, consider a multi-channel convolutional layer.
Let
\[
X \in \mathbb{R}^{C_{\mathrm{in}} \times H \times W}, \quad
Y \in \mathbb{R}^{C_{\mathrm{out}} \times H' \times W'},
\]
with kernel
\[
K \in \mathbb{R}^{C_{\mathrm{out}} \times C_{\mathrm{in}} \times k_h \times k_w}.
\]
For output channel \(c\),
\begin{equation}
Y_{c,i,j}
=
\sum_{c'=1}^{C_{\mathrm{in}}}
\sum_{u=0}^{k_h-1}
\sum_{v=0}^{k_w-1}
K_{c,c',u,v} \, X_{c',\,i+u,\,j+v}.
\label{eq:conv_mc}
\end{equation}

Let \(x = \mathrm{vec}(X)\) and \(y = \mathrm{vec}(Y)\).
Then
\begin{equation}
y = W_{\mathrm{conv}} \, x,
\label{eq:conv_mc_mat}
\end{equation}
where
\[
W_{\mathrm{conv}} =
\begin{bmatrix}
W_{1,1} & \cdots & W_{1,C_{\mathrm{in}}} \\
\vdots & \ddots & \vdots \\
W_{C_{\mathrm{out}},1} & \cdots & W_{C_{\mathrm{out}},C_{\mathrm{in}}}
\end{bmatrix}.
\]
Each block \(W_{c,c'}\) is a BTTB matrix constructed from the kernel slice
\(K_{c,c'}\) as in the single-channel 2D case.

Each row of \(W_{\mathrm{conv}}\) corresponds to a specific output channel and
spatial location and contains exactly
\(C_{\mathrm{in}} k_h k_w\) non-zero entries given by the kernel weights.
Consequently,
\begin{equation}
\|w_i\|_2^2
=
\sum_{c'=1}^{C_{\mathrm{in}}}
\sum_{u=0}^{k_h-1}
\sum_{v=0}^{k_w-1}
K_{c,c',u,v}^2.
\label{eq:row_norm}
\end{equation}

\subsection{Covariance and Variance Propagation}

Let \(\Sigma_x = \mathrm{Cov}(x)\). From~\eqref{eq:conv_mc_mat},
\begin{equation}
\Sigma_y = \mathrm{Cov}(y)
= W_{\mathrm{conv}}\, \Sigma_x\, W_{\mathrm{conv}}^\top.
\label{eq:cov_prop}
\end{equation}
For the \(i\)-th output unit,
\begin{equation}
\mathrm{Var}(y_i) = w_i^\top \Sigma_x w_i,
\label{eq:var_prop}
\end{equation}
where \(w_i^\top\) denotes the \(i\)-th row of \(W_{\mathrm{conv}}\).
These expressions justify the application of linear-algebraic arguments
(e.g., rank, nullspace, and norm-based reasoning) to convolutional layers.

\subsection{Remark on Tensor Representation}

The vectorization of \(X\) and \(Y\) is introduced solely for analytical
purposes. The tensors \(X\) and \(Y\) themselves are not modified, and all
empirical results and interpretations in the main text are expressed in terms
of the original multi-dimensional feature maps.

\section{Decomposition of Entropy and Mutual Information}
\label{app_mi}

\paragraph{Entropy of a random vector.}
Let \( \mathbf Z = (z_1, z_2, \dots, z_D) \) be a continuous random vector with joint density
\( p(z_1,\dots,z_D) \).
By definition, the (differential) entropy of \( \mathbf Z \) is
\[
H(\mathbf Z) = -\int p(z_1,\dots,z_D)\, \log p(z_1,\dots,z_D)\, dz_1\cdots dz_D .
\]

Using the chain rule of probabilities,
\[
p(z_1,\dots,z_D) = p(z_1)\prod_{i=2}^D p(z_i \mid z_{1:i-1}),
\]
and therefore
\[
\log p(z_1,\dots,z_D) = \log p(z_1) + \sum_{i=2}^D \log p(z_i \mid z_{1:i-1}).
\]

Substituting into the entropy definition,
\[
\begin{aligned}
H(\mathbf Z) &= -\int p(z_1,\dots,z_D) \left[ \log p(z_1) + \sum_{i=2}^D \log p(z_i \mid z_{1:i-1}) \right] dz_1\cdots dz_D\\ 
&= -\int p(z_1)\log p(z_1)\,dz_1 - \sum_{i=2}^D \int p(z_1,\dots,z_i)\log p(z_i \mid z_{1:i-1})\,dz_1\cdots dz_i \\
&= H(z_1) + \sum_{i=2}^D H(z_i \mid z_{1:i-1}).
\end{aligned}
\]

Thus, the entropy of the vector admits the chain-rule decomposition
\[
\boxed{
H(\mathbf Z) = \sum_{i=1}^D H(z_i \mid z_{1:i-1}),
}
\]
with the convention \(H(z_1 \mid z_{1:0}) \equiv H(z_1)\).





\paragraph{Setup for Mutual Information Derivation in terms of decomposed Entropy}
Let
\[
\mathbf Z_1 = (z_{11}, z_{12}, \dots, z_{1D}), 
\qquad
\mathbf Z_2 = (z_{21}, z_{22}, \dots, z_{2D})
\]
be two continuous random vectors with a joint density
\(p(\mathbf Z_1,\mathbf Z_2)\).
No independence assumptions are made across dimensions.

\paragraph{Joint Entropy of the two vectors $\mathbf Z_1$ and $\mathbf Z_2$.}
By definition,
\[
H(\mathbf Z_1,\mathbf Z_2)
=
-\int p(\mathbf Z_1,\mathbf Z_2)
\log p(\mathbf Z_1,\mathbf Z_2)
\,d\mathbf Z_1\,d\mathbf Z_2 .
\]

Using the chain rule of probabilities,
\[
p(\mathbf Z_1,\mathbf Z_2)
=
p(\mathbf Z_1)\,p(\mathbf Z_2 \mid \mathbf Z_1),
\]
hence
\[
\log p(\mathbf Z_1,\mathbf Z_2)
=
\log p(\mathbf Z_1)
+
\log p(\mathbf Z_2 \mid \mathbf Z_1).
\]

Substituting into the entropy definition,
\[
\begin{aligned}
H(\mathbf Z_1,\mathbf Z_2)
&=
-\int p(\mathbf Z_1,\mathbf Z_2)
\big[
\log p(\mathbf Z_1)
+
\log p(\mathbf Z_2 \mid \mathbf Z_1)
\big]
\,d\mathbf Z_1\,d\mathbf Z_2
\\
&=
-\int p(\mathbf Z_1)\log p(\mathbf Z_1)\,d\mathbf Z_1
-
\int p(\mathbf Z_1,\mathbf Z_2)
\log p(\mathbf Z_2 \mid \mathbf Z_1)
\,d\mathbf Z_1\,d\mathbf Z_2
\\
&=
H(\mathbf Z_1)
+
H(\mathbf Z_2 \mid \mathbf Z_1).
\end{aligned}
\]

\paragraph{Entropy of each vector.}
From the vector entropy derivation,
\[
H(\mathbf Z_1)
=
\sum_{i=1}^D H(z_{1i} \mid z_{1,1:i-1}),
\qquad
H(\mathbf Z_2)
=
\sum_{j=1}^D H(z_{2j} \mid z_{2,1:j-1}).
\]

\paragraph{Definition of mutual information.}
The mutual information between the two vectors is
\[
\mathcal{I}(\mathbf Z_1;\mathbf Z_2)
=
H(\mathbf Z_1) + H(\mathbf Z_2) - H(\mathbf Z_1,\mathbf Z_2).
\]

Substituting the joint entropy decomposition,
\[
\begin{aligned}
\mathcal{I}(\mathbf Z_1;\mathbf Z_2)
&=
H(\mathbf Z_1) + H(\mathbf Z_2)
-
\big[
H(\mathbf Z_1) + H(\mathbf Z_2 \mid \mathbf Z_1)
\big]
=
H(\mathbf Z_2) - H(\mathbf Z_2 \mid \mathbf Z_1).
\end{aligned}
\]

\paragraph{Step 5: Decomposition over dimensions of \(\mathbf Z_2\).}
Using the entropy chain rule for \(\mathbf Z_2\),
\[
H(\mathbf Z_2 \mid \mathbf Z_1)
=
\sum_{j=1}^D
H(z_{2j} \mid \mathbf Z_1, z_{2,1:j-1}),
\]
hence
\[
\begin{aligned}
\mathcal{I}(\mathbf Z_1;\mathbf Z_2)
&=
\sum_{j=1}^D
\Big[
H(z_{2j} \mid z_{2,1:j-1})
-
H(z_{2j} \mid \mathbf Z_1, z_{2,1:j-1})
\Big]
=
\boxed{
\sum_{j=1}^D
\mathcal{I}\left(
z_{2j}; \mathbf Z_1 \mid z_{2,1:j-1}
\right)
}
\end{aligned}
\]

\paragraph{Symmetric form.}
By symmetry,
\[
\boxed{
\mathcal{I}(\mathbf Z_1;\mathbf Z_2)
=
\sum_{i=1}^D
\mathcal{I}\left(
z_{1i}; \mathbf Z_2 \mid z_{1,1:i-1}
\right).
}
\]

\paragraph{Step 6: Chain rule for conditional mutual information.}

Applying the chain rule to \(\mathbf Z_1=(z_{11},\dots,z_{1D})\),
\[
\mathcal{I}(z_{2j}; \mathbf Z_1 \mid z_{2,1:j-1})
=
\sum_{i=1}^D
\mathcal{I}\left(
z_{2j}; z_{1i}
\;\middle|\;
z_{2,1:j-1},\, z_{1,1:i-1}
\right).
\]

\paragraph{Final decomposition.}
\[
\boxed{
\mathcal{I}(\mathbf Z_1;\mathbf Z_2)
=
\sum_{j=1}^D
\sum_{i=1}^D
\mathcal{I}\left(
z_{2j}; z_{1i}
\;\middle|\;
z_{2,1:j-1},\, z_{1,1:i-1}
\right)
}
\]

\section{Why $M$ Maps All Variance in $\operatorname{Null}(M)$ to Zero and Acts as a Bijection on $\operatorname{Range}(M^\top)$}
\label{sec:appM}

This comes directly from the Fundamental Theorem of Linear Algebra and the orthogonal decomposition of vector spaces. 

\paragraph{1. Why $M$ Maps All Variance in $\operatorname{Null}(M)$ to Zero}

By definition, the nullspace (or kernel) of a matrix $M \in \mathbb{R}^{D_o \times D_o}$ consists of all vectors that $M$ evaluates to zero:
$$\operatorname{Null}(M) = \{\mathbf{v} \in \mathbb{R}^{D_o} \mid M \mathbf{v} = \mathbf{0}\}$$

If a random vector $Z^\perp$ lies entirely in $\operatorname{Null}(M)$, then every realization of $Z^\perp$ satisfies $M Z^\perp = \mathbf{0}$.Since $M Z^\perp$ is identically the constant zero vector $\mathbf{0}$:Its mean is $\mathbb{E}[M Z^\perp] = \mathbf{0}$.

Its covariance matrix is $\operatorname{Cov}(M Z^\perp) = M \operatorname{Cov}(Z^\perp) M^\top = \mathbf{0}$. Thus, $M$ completely extinguishes all variance along these directions—collapsing those dimensions into a single point at the origin.

\paragraph{2. Why $M$ Acts as a Bijection on $\operatorname{Range}(M^\top)$}

By the Fundamental Theorem of Linear Algebra, the space $\mathbb{R}^{D_o}$ decomposes into two mutually orthogonal, complementary subspaces:
$$\mathbb{R}^{D_o} = \operatorname{Range}(M^\top) \oplus \operatorname{Null}(M)$$

Now consider the linear mapping $M$ restricted exclusively to vectors in $\operatorname{Range}(M^\top)$, i.e., $M\vert_{\operatorname{Range}(M^\top)}: \operatorname{Range}(M^\top) \to \operatorname{Range}(M)$. 

To prove it is a bijection (both injective and surjective):

\begin{itemize}
    \item \textbf{A. Injectivity (One-to-One)}
    
    Suppose two vectors $\mathbf{v}_1, \mathbf{v}_2 \in \operatorname{Range}(M^\top)$ produce the same output under $M$:
$$M \mathbf{v}_1 = M \mathbf{v}_2 \implies M(\mathbf{v}_1 - \mathbf{v}_2) = \mathbf{0}$$

This implies that the difference vector $(\mathbf{v}_1 - \mathbf{v}_2)$ lies in $\operatorname{Null}(M)$. However, since $\mathbf{v}_1, \mathbf{v}_2 \in \operatorname{Range}(M^\top)$, their difference $(\mathbf{v}_1 - \mathbf{v}_2)$ must also belong to $\operatorname{Range}(M^\top)$.Because $\operatorname{Range}(M^\top)$ and $\operatorname{Null}(M)$ are orthogonal complements, their intersection contains only the zero vector:
$$\operatorname{Range}(M^\top) \cap \operatorname{Null}(M) = \{\mathbf{0}\} \implies \mathbf{v}_1 - \mathbf{v}_2 = \mathbf{0} \implies \mathbf{v}_1 = \mathbf{v}_2$$

This proves that $M$ is strictly injective on $\operatorname{Range}(M^\top)$ (no two distinct inputs in $\operatorname{Range}(M^\top)$ map to the same output).

\item \textbf{B. Surjectivity (Onto)}

By the Rank-Nullity Theorem, the dimension of the row space equals the dimension of the column space (the rank $r$):
$$\dim(\operatorname{Range}(M^\top)) = \operatorname{rank}(M) = \dim(\operatorname{Range}(M)) = r$$

An injective linear map between two finite-dimensional vector spaces of identical dimension $r$ is automatically surjective (onto $\operatorname{Range}(M)$).
\end{itemize}

This is important for evaluating Mutual Information because $M$ is a bijection on $\operatorname{Range}(M^\top)$, passing $Z^\parallel \in \operatorname{Range}(M^\top)$ through $M$ is a lossless, invertible transformation. Invertible linear mappings preserve mutual information perfectly:$$\mathcal{I}(M Z_k^\parallel; M Z_l^\parallel) = \mathcal{I}(Z_k^\parallel; Z_l^\parallel)$$Meanwhile, because $M$ maps $\operatorname{Null}(M)$ to $\mathbf{0}$, all information contained in $Z^\perp \in \operatorname{Null}(M)$ is permanently wiped out. This isolates the exact source of information loss to $\mathcal{I}(Z_k^\perp; Z_l^\perp \mid Z_k^\parallel, Z_l^\parallel)$.

\end{document}